\documentclass{article}
\usepackage{iclr2026_conference,times}

\usepackage{hyperref}
\usepackage{url}
\usepackage{graphicx}
\usepackage{xcolor}
\usepackage{booktabs}
\usepackage{tabularx}
\usepackage{multirow}
\usepackage{amsmath,amssymb,amsthm}
\newtheorem{proposition}{Proposition}
\newtheorem{remark}{Remark}

\usepackage[ruled,vlined,linesnumbered]{algorithm2e}
\SetAlgoNlRelativeSize{-1}
\SetAlgoCaptionSeparator{:}
\usepackage{tikz}
\usepackage{pgfplots}
\pgfplotsset{compat=1.17}
\usepgfplotslibrary{groupplots}
\usetikzlibrary{arrows.meta,positioning,shapes.geometric,calc,fit,backgrounds,decorations.pathreplacing,patterns}
\usepackage{microtype}
\usepackage{enumitem}
\usepackage{listings}
\hypersetup{
  colorlinks=true,
  linkcolor=blue!50!black,
  citecolor=blue!50!black,
  urlcolor=blue!60!black
}

\newcommand{\sys}{\textsc{SuperBrowser}\xspace}

\renewcommand{\sys}{\textsc{SuperBrowser}}
\newcommand{\Vn}{$[V_n]$}
\newcommand{\Vidx}[1]{$[V_{#1}]$}

\newcommand{\code}[1]{\texttt{\small #1}}

\iclrfinalcopy

\title{RunAgent \sys{}: A Theory of Autonomous Web\\Navigation Grounded in Human Browsing Behaviour}

\author{Radeen Mostafa \\
RunAgent AI
\And
Sawradip Saha \\
RunAgent AI
}

\begin{document}
\maketitle
\lhead{Preprint. Work in progress.}
\rhead{\today}
\cfoot{\thepage}
\thispagestyle{fancy}

\begin{abstract}
We present \sys, an autonomous web-navigation agent designed against a single
guiding hypothesis: \emph{a web agent should browse the way a person browses}.
A human reading a page does not retain every pixel they have seen; they look at
a few candidate targets, decide on one, and remember only what is needed to keep
the goal alive. We operationalize this perception--cognition--action triad as
three coupled mechanisms. First, a vision-first bounding-box pipeline labels
candidate interactive regions on every screenshot and feeds them, asynchronously
prefetched, to the language model so that the ``eye'' precedes the ``hand''.
Second, a three-role brain---an \emph{Orchestrator} that classifies and routes,
a \emph{Planner} that evaluates progress every few steps, and a \emph{Worker}
that emits per-step actions---separates strategic from operational reasoning.
Third, a structured \emph{Ledger} stores only what a person would: the goal,
the last three actions, a small set of facts and dead-ends, and a handful of
checkpoints; a six-phase eviction loop systematically discards stale screenshots,
state blobs, and reasoning traces from the live context. Action execution is a
three-tier click cascade (Chrome DevTools Protocol $\to$ Puppeteer $\to$
scripted) with humanized B\'ezier motion, plus a chevron-aware bounding-box
snapper that resolves the ``small arrow beside a large label'' ambiguity. On the
Mind2Web Hard benchmark (66 tasks), \sys{} attains \textbf{89.47\%} success,
placing third overall and ahead of every published open / research browser-agent
baseline by a large margin. We argue that the gain comes not from any single
trick but from the consistent application of a cognitive contract throughout the
system.
\end{abstract}

\begin{center}
Code: \url{https://github.com/runagent-dev/runagent-superbrowser}
\end{center}

\section{Introduction}
\label{sec:intro}

A person booking a hotel does not photograph every page they visit. They glance
at the page, narrow their attention to a few candidate buttons, click one, and
on the next page they remember almost nothing of the previous one except the
goal (``three nights in Khulna under \$40''), the last action (``I picked the
April 23 check-in''), and perhaps one or two salient facts (``the cheapest
result was \$32''). They certainly do not carry forward the DOM tree, the list
of every link they saw, or the failure messages from three pages ago. This
parsimony is not a bug of human cognition; it is the architecture that makes
long-horizon goal pursuit feasible at all under bounded working memory
\citep{miller1956magical,sweller1988cognitive}.

Most current LLM-based web agents do the opposite. They accumulate
observations, screenshots, element lists, and reasoning traces into a single
ever-growing prompt, then ask the same model to keep deciding. The result is
predictable: as steps accumulate, the prompt-cache hit rate collapses, hidden
tokens drift attention away from the goal, and reliability on long tasks
degrades sharply \citep{deng2023mind2web,zhou2024webarena}. The community has
responded with longer context windows, but \emph{capacity} is not the same as
\emph{discipline}: a 200K-token context populated by thirty screenshots is
still a worse decision-making environment than a 12K-token context populated by
a goal, three recent actions, and a handful of facts.

In this paper we make the cognitive analogy load-bearing rather than rhetorical.
We treat web navigation as an instance of the classical
perception--cognition--action triad and design an agent, \sys{}, in which every
component honours the analogy:

\begin{itemize}[leftmargin=*,itemsep=2pt,topsep=2pt]
  \item \textbf{Perception} is a vision-bounding-box pipeline: a vision model
        labels candidate interactive regions $[V_0],[V_1],\dots$ on each
        screenshot, with DOM enrichment (\code{aria-expanded}, \code{is\_active},
        \code{group\_label}) and asynchronous prefetch.
  \item \textbf{Cognition} is split across three roles---\emph{Orchestrator}
        (verb-classifier routing between search and browse), \emph{Planner}
        (every-$N$-step progress evaluation), \emph{Worker} (per-step action
        emission)---each with a sharply scoped prompt.
  \item \textbf{Memory} is a structured \emph{Ledger}: a goal, a plan, a
        bounded recency deque of three steps, a fact dictionary, a list of
        dead-ends, and a list of checkpoints. A six-phase eviction loop runs
        every iteration and discards screenshots, failures, state blobs, and
        reasoning blocks that have aged out.
  \item \textbf{Action} is a three-tier click cascade (CDP mouse with
        humanized B\'ezier curves $\to$ Puppeteer $\to$ scripted) and a
        chevron-aware bounding-box snapper that resolves common UI ambiguities
        (\emph{e.g.}, a tiny ``$\blacktriangledown$'' arrow next to a wide
        ``United States'' label).
\end{itemize}

\paragraph{Contributions.} (i) We articulate a cognitive theory of autonomous
web navigation as a system contract, not as a metaphor, and identify three
falsifiable predictions it makes (\S\ref{sec:cogtheory}). (ii) We propose a
three-role brain architecture with verb-classifier routing, vision prefetch,
and structured episodic memory (\S\ref{sec:arch}, \S\ref{sec:memory}).
(iii) We introduce a six-phase context-eviction loop that holds steady-state
context roughly constant per iteration while preserving a structured ledger
(\S\ref{sec:memory}). (iv) We describe and evaluate a chevron-aware
bounding-box snapping algorithm and a three-tier humanized click cascade that
resolve longstanding failure modes of vision-grounded web agents
(\S\ref{sec:action}). (v) On Mind2Web Hard (66 tasks), \sys{} attains
\textbf{89.47\%}, placing third overall and ahead of every published open /
research browser-agent baseline by an order of magnitude
(\S\ref{sec:eval}).

\section{Related Work}
\label{sec:related}

\paragraph{Web-navigation benchmarks.}
Mind2Web \citep{deng2023mind2web} introduced a large-scale crawl-and-replay
corpus of human web traces and a Hard subset of $66$ long-horizon tasks that
has since become a standard yardstick for browser-agent generality.
Online-Mind2Web \citep{xue2025illusion} re-evaluates $300$ live tasks on
$136$ high-traffic sites and reports a sobering ``illusion of progress''
result: most commercial agents underperform a $2024$ academic baseline, and
even top systems score below $50\%$ on the hard split. Mind2Web~2
\citep{gou2025mind2web2} adds $130$ new long-horizon tasks evaluated with an
agent-as-a-judge protocol. Earlier sandboxed environments include WebArena
\citep{zhou2024webarena} and its multimodal extension VisualWebArena
\citep{koh2024visualwebarena}; the simulated end of the spectrum is covered
by MiniWoB \citep{shi2017wob,liu2018miniwob}. Operating-system-level
analogues include OSWorld \citep{xie2024osworld}; the BrowserGym ecosystem
\citep{drouin2024browsergym} provides a unified harness for web-agent
research, and WebCanvas \citep{pan2024webcanvas} and WebGames
\citep{lu2025webgames} test online and challenging-task conditions
respectively.

\paragraph{Visual grounding for LLM agents.}
Set-of-Mark prompting \citep{yang2023setofmark} demonstrated that overlaying
numbered marks on candidate UI regions sharply improves a multimodal LLM's
ability to act on a page. SeeAct \citep{zheng2024seeact} formalised
``planning-then-grounding'' for GPT-4V on web tasks; CogAgent
\citep{hong2024cogagent} pushes the vision encoder to $1120{\times}1120$
pixels to read small UI text on screenshots; UI-TARS \citep{qin2025uitars}
packages a dual-resolution VLM and a native action interface into an open
GUI-agent stack.

\paragraph{Browser-agent frameworks.}
ReAct \citep{yao2023react} established the interleaved think--act trace that
underpins most modern agents; ReWOO \citep{xu2023rewoo} decouples reasoning
from observation to reduce token cost; Toolformer
\citep{schick2023toolformer} taught language models to invoke external
tools from self-supervised data. Browser-specific predecessors include
WebGPT \citep{nakano2021webgpt}, WebVoyager \citep{he2024webvoyager},
AutoWebGLM \citep{lai2024autowebglm}, and the open-source
\texttt{browser-use} library \citep{browseruse2024}. Recent work surveys
architectural and security considerations across the resulting agent
landscape \citep{shahbandeh2025browseragents,liang2024security}. Closely
related to our role-separated design is the dispatcher / supervisor /
executor split used in robotics \citep{ahn2022saycan} and
operating-system-style agents \citep{wu2024oscar}.

\paragraph{Memory-aware agents.}
Reflexion \citep{shinn2023reflexion} accumulates verbal self-critiques
across episodes; Voyager \citep{wang2024voyager} grows a library of
reusable skills; Generative Agents \citep{park2023generative} maintain
reflective memory streams. MemGPT \citep{packer2023memgpt} treats the
context window as ``main memory'' over a paged archive, drawing an
explicit analogue to operating systems; A-Mem \citep{xu2025amem}
structures agentic memory as linked notes retrievable by relevance;
LongMem \citep{wang2023longmem} augments language models with an external
retriever-trained memory tier. Recent surveys catalogue this rapidly
expanding landscape \citep{zhang2025memorysurvey}.

\medskip
\noindent
Against this body of work, \sys{} positions itself as the first system
to combine cognitive-theory-motivated bounded context, role-sliced
strategic/tactical/operational reasoning, and a procedural-DOM tier that
\emph{subtracts} stale perception on every turn rather than accumulating
it.

\section{A Cognitive Theory of Web Navigation}
\label{sec:cogtheory}

We start from a deliberately strong claim: the best decomposition of an
autonomous web agent is the same decomposition that cognitive science has
arrived at for human information-seeking behaviour. In this section we
articulate that decomposition (\S\ref{sec:triad}), ground it in working-memory
and episodic-recall theory (\S\ref{sec:memcog}), connect it to empirical
findings from four decades of eye-tracking and information-foraging studies on
real human web browsing (\S\ref{sec:humanbrowse}), and extract three
falsifiable predictions our experiments later test (\S\ref{sec:predictions}).

\subsection{The Perception--Cognition--Action Triad}
\label{sec:triad}

The classical sensorimotor pipeline has three stages: a percept is constructed
from raw signal, a decision is taken in working memory under top-down task
constraints, and an action is dispatched to effectors. Figure~\ref{fig:cogstack}
shows our mapping. A web agent's ``retina'' is a screenshot, useless unless
segmented into candidate interactive regions; we therefore treat the vision
model as a \emph{candidate generator}, the same way the visual system generates
proto-objects before attention selects one---a strategy formalised in the
saliency-map literature \citep{itti1998saliency,itti2001computational}. The
agent's ``brain'' chooses which \Vn{} to act on, separated into a slow
strategic loop (Planner) and a fast operational loop (Worker), echoing the
System~2 / System~1 distinction \citep{kahneman2011thinking}. The agent's
``hand'' is the click cascade, which must be both reliable and---against bot
detection---plausibly human in its kinematics.

\begin{figure}[t]
\centering
\includegraphics[width=\linewidth]{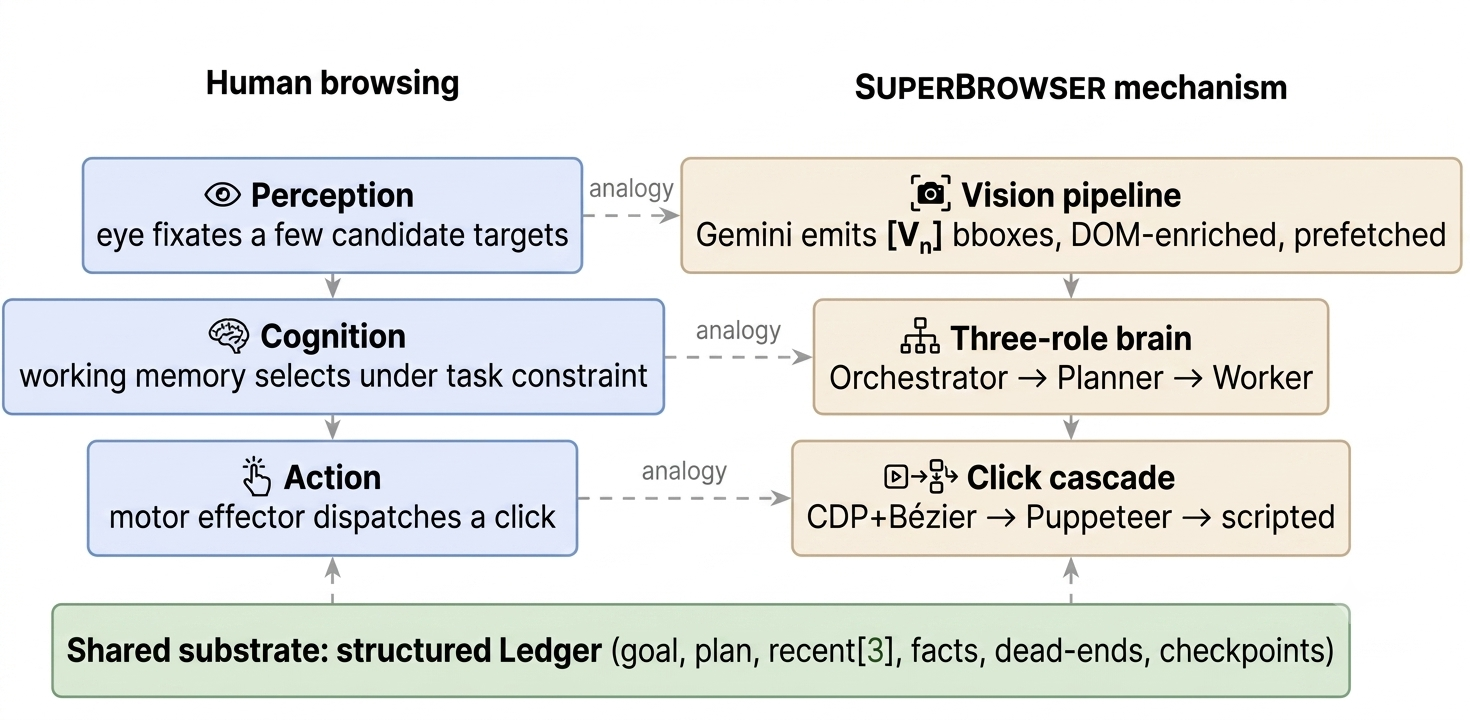}
\caption{The perception--cognition--action triad as instantiated in \sys.
The human column lists the cognitive primitives; the system column lists the
corresponding mechanisms. Memory is a shared substrate that constrains both
cognition (what is salient) and perception (what is attended).}
\label{fig:cogstack}
\end{figure}

\subsection{Working Memory and Episodic Recall}
\label{sec:memcog}

Three findings from cognitive psychology shape our memory design.

\textbf{Bounded recency.} Working memory holds roughly seven plus or minus two
chunks in classic estimates \citep{miller1956magical}, but more recent
re-analyses suggest the active-manipulation capacity is closer to four
\citep{cowan2001magical}. We instantiate this bound directly: \sys's
\emph{recent} actions deque has a fixed maximum length of three, deliberately
chosen at the conservative end of the empirical range. The chosen length still
preserves the ``where am I in the local plan'' signal across short page
transitions; older steps are not deleted, but demoted to the persistent ledger
where they no longer occupy the live prompt.

\textbf{Multi-component structure.} Baddeley and Hitch's working-memory model
\citep{baddeley1974working,baddeley2000episodic} separates a central executive
from two slave systems---a phonological loop for verbal material and a
visuospatial sketchpad for spatial material---bound together by a later-added
\emph{episodic buffer} that integrates these streams with long-term knowledge.
\sys's three-role brain follows the same shape. The Planner is the central
executive (what to do next, given the task); the Worker uses both verbal
(action arguments, target labels) and visuospatial (\Vn{} bounding boxes,
viewport coordinates) channels; and the Ledger plays the role of the episodic
buffer, integrating both streams with the long-term task representation
(goal, plan, accumulated facts).

\textbf{Structured episodic recall.} People remember tasks not as raw sensory
streams but as schematic episodes: a goal, a sequence of attempts, a few
landmarks, and one or two ``do not do that again'' lessons
\citep{tulving1972episodic,sweller1988cognitive}. \sys's Ledger
(\S\ref{sec:ledger}) is the schematic episode for the current task: a single
mutable record with named slots for \code{goal}, \code{plan}, \code{facts},
\code{dead\_ends}, \code{recent}, \code{checkpoints}, and \code{episodic}. The
contrast with reflective-memory agents such as Reflexion
\citep{shinn2023reflexion} and Voyager \citep{wang2024voyager}---which
\emph{accumulate} reflections---is deliberate: we accumulate \emph{distilled}
items while \emph{discarding} their raw observation traces.

\textbf{Declarative vs. procedural memory.} A second axis cuts across
working-memory capacity. The classical taxonomy
\citep{squire1992declarative,anderson1982skill,anderson2004actr} separates
\emph{declarative} memory (facts and episodes one can verbalise) from
\emph{procedural} memory (skills and motor programs one executes without
re-deliberation). Fitts and Posner's three-stage model of skill acquisition
\citep{fitts1967human} predicts that with repetition, an action moves from
the cognitive stage (``find the button, plan the click'') through associative
to autonomous (``just click''); Logan's instance theory of automatization
\citep{logan1988automatic} gives a formal account of the same transition.
The implication for a web agent is that the perception--cognition stages
(vision call, planning) should not be re-paid on \emph{every} action. When
the page state has not changed materially, the action is procedural, and
the agent should fire it from a cached representation of where things are.
This is the architectural commitment behind \S\ref{sec:procedural}, our
DOM-cache subsystem.

\subsection{What Humans Actually Do When Browsing}
\label{sec:humanbrowse}

The cognitive primitives above are well-established in laboratory tasks, but
the load-bearing claim of this paper is about a specific environment: a person
in front of a real, cluttered, dynamic web page trying to get something done.
Four decades of empirical work on human web browsing converge on patterns that
directly motivate \sys's design.

\textbf{Scanpaths over rasters.} People do not perceive a webpage as a uniform
field of pixels; they construct it through a sequence of fixations and saccades
\citep{noton1971scanpath}. Each fixation lands on a candidate target chosen by
a mix of bottom-up saliency (luminance, contrast, motion) and top-down
relevance (does this match my goal). The Itti--Koch saliency-map model
\citep{itti1998saliency} formalises the bottom-up stream as a winner-take-all
selection over a topographic conspicuity map---essentially what the vision
model in \sys{} does when it emits \Vn{} bounding boxes. Crucially, scanpaths
on the \emph{same} page are largely consistent across users for a given task
\citep{josephson2002visual}: human attention is task-driven, not just
pixel-driven, which is also why our snapper must consult the expected label,
not only the geometry.

\textbf{The F-pattern and visual hierarchy.} Large-scale eye-tracking of
real-world web reading reveals strongly non-uniform attention. Nielsen's
``F-shaped pattern''---two horizontal sweeps near the top of the page followed
by a vertical sweep down the left edge---characterises how users skim
text-dense pages~\citep{nielsen2006fpattern,pernice2017eyetracking}. The
implication for an agent is that the visually dominant element is rarely the
intended target; the salient \emph{small} elements (search boxes, link
clusters, expand affordances) sit at predictable positions in the visual
hierarchy and are the things users actually click. This is exactly the
``small arrow beside a large label'' problem the chevron tiebreaker
(\S\ref{sec:chevron}) solves: the large text is what a naive
area-weighted snapper picks, but the small chevron is what a person picks.

\textbf{Information foraging and information scent.} Pirolli and Card's
information-foraging theory \citep{pirolli1995foraging,pirolli2007foraging}
models a user's web navigation as a forager moving between patches, choosing
patches by their perceived ``information scent''---proximal cues such as
link text, icons, and breadcrumbs that signal whether the distal information
goal is reachable. Two predictions follow. First, users \emph{do not
backtrack indefinitely}: when a patch's scent grows weak they move on, and
they remember the negative scent as a marker not to revisit. This is the
cognitive analogue of \sys's \code{dead\_ends} list, which records the URL
and cause of each failure so the agent does not re-enter the same patch.
Second, users \emph{do not retain the full content of patches they have
left}; they retain only the scent they extracted. This is the cognitive
analogue of our screenshot back-patch: the visual gist (the caption) survives,
the pixels do not.

\textbf{Recall is reconstructive, not photographic.} Outside the foveal
fixation point, visual memory of a page is famously schematic: people remember
the layout and one or two element identities but cannot reproduce most of what
they saw \citep{tulving1972episodic,pernice2017eyetracking}. An agent that
faithfully accumulates every screenshot is therefore not imitating the human
strategy; it is doing something the human visual system has explicitly evolved
\emph{not} to do, because such storage is expensive and the marginal
informational value of a five-page-old screenshot is near zero.

\subsection{Three Predictions}
\label{sec:predictions}

Cast as a theory, the cognitive analogy makes three predictions about how a
correctly designed agent should behave on real web tasks.

\begin{description}[leftmargin=*,itemsep=2pt]
  \item[P1 (Bounded growth).] Live-context token usage should grow
        sublinearly---ideally near constant---with the number of steps.
        Naive accumulation predicts linear growth; cognitive eviction
        predicts a slow log or constant trend. This is what information
        foraging theory predicts of human browsing memory consumption, too.
  \item[P2 (First-class failure).] Failures should not be merely tolerated;
        they should be remembered as \emph{negative landmarks}, analogous
        to the way a forager remembers a patch with weak scent and avoids
        it on later visits. Forgetting a failure within a few steps causes
        the agent to retry it.
  \item[P3 (Sub-element preference).] When the same screen region can be
        interpreted as one large clickable label or one small clickable
        sub-element (a chevron, a checkbox, an arrow), the small
        sub-element is usually the intended target---consistent with the
        F-pattern finding that user attention seeks small, semantically
        loaded affordances rather than dominant visual masses. A perception
        module that prefers larger area will systematically err on the
        ``United States $\blacktriangledown$'' family of UIs.
\end{description}

Our experiments (\S\ref{sec:eval}) test each prediction. \S\ref{sec:memory}
addresses P1 and P2; \S\ref{sec:action} addresses P3.

\section{System Architecture}
\label{sec:arch}

\sys{} consists of three cooperating LLM-driven roles, a vision pipeline, and a
tool layer (Figure~\ref{fig:threerole}). All roles share read access to a single
per-task Ledger; only the Worker mutates page state.

\begin{figure}[t]
\centering
\includegraphics[width=\linewidth]{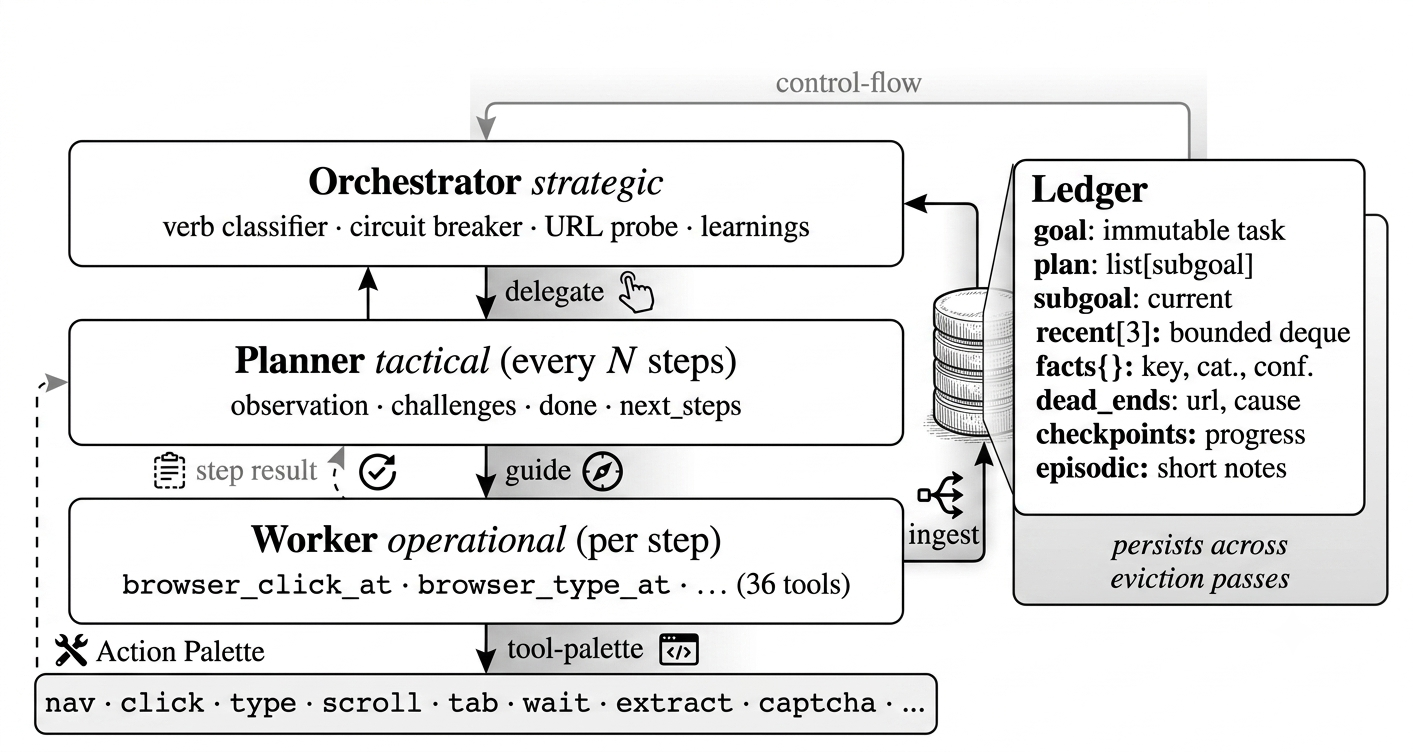}
\caption{Three-role brain. The Orchestrator classifies and routes; the Planner
re-evaluates progress every $N$ steps; the Worker emits per-step actions. The
structured Ledger is read-only for the Planner and the Orchestrator and is
mutated by the Worker via the memory hook.}
\label{fig:threerole}
\end{figure}

\subsection{Three Roles}
\label{sec:roles}

\textbf{Orchestrator.} The strategic layer. Receives a free-form user task,
classifies it (\S\ref{sec:routing}), and delegates to either a Search worker
(for read-only aggregation queries) or a Browser worker (for transactional and
visual tasks). It also maintains per-domain learnings---which engine tier
worked, which captcha strategy succeeded---and refuses redundant re-delegation
when a prior worker already returned a substantive result.

\textbf{Planner.} The tactical layer. Invoked every $N$ steps (default $N=4$)
and whenever the Worker signals possible completion. It receives the page
state, a condensed history, and the current Ledger render, and emits a JSON
object with \code{observation}, \code{challenges}, \code{done}, \code{next\_steps},
\code{final\_answer}, and \code{reasoning} fields. This is the only place
``the task is done'' can be declared.

\textbf{Worker.} The operational layer. Sees the current page state with
\Vn{}-labelled bounding boxes and decides one or more concrete actions per
step (default at most five before re-evaluation). Page-changing actions
(navigation, form submission) abort the action batch so the Planner can
re-engage.

Algorithm~\ref{alg:threerole} sketches the top-level loop.

\begin{algorithm}[t]
\SetAlgoLined
\DontPrintSemicolon
\KwIn{user task $T$, max steps $M$, planner interval $N$}
\KwOut{final answer or failure}
\BlankLine
$(\text{kind}, \text{conf}) \leftarrow \textsc{ClassifyTask}(T)$\;
\If{\textnormal{kind} = \textsc{search}}{
  \KwRet $\textsc{SearchWorker}(T)$\;
}
$\mathcal{L} \leftarrow \textsc{Ledger}(\text{goal}{=}T)$\;
$\textit{page} \leftarrow \textsc{OpenBrowser}(T.\text{url})$\;
\For{$s \leftarrow 1$ \KwTo $M$}{
  $\textit{state} \leftarrow \textsc{Observe}(\textit{page})$ \tcp*[r]{screenshot + DOM + \Vn{} prefetched cache}
  \If{$s \bmod N = 0 \;\textbf{or}\; \textit{workerDone}$}{
    $\textit{plan} \leftarrow \textsc{Planner}(\textit{state}, \mathcal{L}, T)$\;
    \lIf{$\textit{plan}.\text{done}$}{\KwRet $\textit{plan}.\text{finalAnswer}$}
  }
  $\textit{actions} \leftarrow \textsc{Worker}(\textit{state}, \mathcal{L}, \textit{plan}.\text{nextSteps})$\;
  \ForEach{$a \in \textit{actions}$}{
    $r \leftarrow \textsc{Execute}(a, \textit{page})$\;
    $\mathcal{L} \leftarrow \textsc{Ingest}(\mathcal{L}, a, r, \textit{state}.\text{url})$ \tcp*[r]{Alg.~\ref{alg:mem}}
    \lIf{$a$ is page-changing $\textbf{and}\; r.\text{success}$}{\textbf{break}}
  }
}
\KwRet \textit{failure}\;
\caption{Three-role run loop (top level).}
\label{alg:threerole}
\end{algorithm}

\subsection{Verb-Classifier Task Routing}
\label{sec:routing}

A surprisingly large fraction of browser-agent errors are caused not by bad
action choices but by the agent agreeing to perform the wrong \emph{kind} of
task---e.g., opening a browser session to answer a factual lookup
(``what is the capital of Norway''). The Orchestrator implements a deterministic
verb classifier (Algorithm~\ref{alg:routing}, full pseudocode in
Appendix~\ref{app:routing}) that scores the user instruction against five
pattern families and a date-indicator regex. Tasks scoring high on action
verbs, transactional patterns with a concrete date, or brand names are routed
to the browser; tasks scoring high on aggregation or factual lookup are routed
to a search worker that uses HTTP fetching only. A circuit breaker tracks
repeated delegations to the same \code{(domain, task)} hash and refuses
re-delegation when the prior result was substantive.

\subsection{Vision Pipeline}
\label{sec:vision}

Every screenshot is sent to a multimodal vision model (any
OpenAI-compatible vision endpoint will do; in practice a small,
cost-efficient flash-tier model is sufficient and what we deploy), which
returns a list of candidate interactive regions with bounding boxes and
short labels. The model emits boxes in a normalized
$[y_{\min}, x_{\min}, y_{\max}, x_{\max}]$ format scaled to $[0, 1000]$
against the screenshot, which the bridge denormalizes to CSS pixels.
Three engineering details make this practical at scale.

\textbf{Asynchronous prefetch.} Vision calls are scheduled immediately after a
mutating action; the LLM then issues the next action against the cached
response from the previous prefetch. This pipelining eliminates roughly
$900\text{ ms}$ per iteration from the critical path---about a 30\% wall-clock
reduction at typical step rates.

\textbf{DOM enrichment.} Each returned bounding box is annotated, at the bridge
layer, with information that is invisible to the vision model but essential for
correct action: the matching DOM selector index, \code{aria-expanded} state,
whether the element is currently active (checkbox checked, filter chip
toggled), the resolved \code{aria-labelledby} text, and the index of any
parent ``expand'' control. The snapper of \S\ref{sec:bbox} uses these to
disambiguate compound rows.

\textbf{Compound-row splitting.} When a single bounding box contains both a
text chip and an adjacent chevron (a common pattern in filter UIs), the bridge
splits the box into two before exposing it to the LLM, so the model can
address the chip and the chevron independently.

\textbf{Set-of-Marks anchoring across passes.} Before sending each new
screenshot to the vision model, the bridge overlays the bounding boxes
emitted on the \emph{previous} pass as dashed coloured rectangles
labelled \Vidx{1}, \Vidx{2}, $\dots$ \citep{yang2023setofmark}. The
model is instructed to (i) compute fresh bboxes from the current
screenshot and only then (ii) compare against the overlay, adopting
prior coordinates when they still match and dropping marks whose
underlying element has disappeared. This gives the vision tier
\emph{cross-turn perceptual continuity}---an analogue of how a human
maintains a stable mental map of where the search box was even as the
page below it scrolls---and reduces re-labelling drift on stable UI
elements. The overlay is suppressed when the prior pass is older than
ten seconds or the URL has changed, since stale anchors mislead more
than they help.

\textbf{Subgoal-aware bbox emphasis.} When the active subgoal is
known, the bridge forwards a short description of it (and any
declared lookup labels) to the vision model in the user prompt, with
the instruction that \code{intent\_relevant} be set true \emph{only}
for elements that serve this immediate subgoal---not for every
element that might serve the broader task. The downstream Worker
prompt re-renders bboxes ranked by subgoal relevance, so the
operational layer sees the locally salient affordances first.

\textbf{Page-type-aware vision tiering.} The vision model emits a
\code{page\_type} classification on every pass
(\code{search\_results}, \code{checkout\_form}, \code{product\_listing},
\code{map\_or\_booking}, \dots). Heavy pages---those with many filter
controls, long booking forms, dense result grids---tend to truncate or
under-emit on the smallest flash-tier model. Once a page is classified
as a complex type on a given \code{(session, url)}, subsequent passes
escalate directly to a larger fallback vision model, avoiding the
``flash truncates $\to$ compact retry $\to$ fallback'' round-trip ladder
that would otherwise repeat on every screenshot. The classification is
remembered per session, not globally, so the next task on the same URL
starts fresh.

\textbf{Coverage pass on validator-flagged misses.} Vision occasionally
culls a small, semantically critical control (icon buttons, single
checkboxes, value triggers) under the default $50$-bbox cap. When the
post-action validator reports a target label as missing from the
emitted bboxes but expected to be present, the bridge issues a
second-pass \emph{coverage} call: the bbox cap is lifted to $60$,
page-type culling rules are suspended, and the model is given the
explicit list of expected labels and any DOM-side coordinate hints.
This recovers $\sim 5\%$ of otherwise-failed clicks in our internal
traces without inflating the average per-screenshot vision cost,
because coverage passes only fire on validator-flagged misses.

\subsection{Tool Layer}
\label{sec:tools}

The Worker has access to thirty-six browser tools spanning navigation,
interaction, scroll, tab management, control flow, content extraction, captcha
handling, and geo-blocking detection. Appendix~\ref{app:tools} tabulates the
complete list. The two most important for present purposes are
\code{browser\_click\_at(vision\_index, target\_label)}, which clicks within a
\Vn{} bounding box and triggers the snapper, and
\code{browser\_type\_at(vision\_index, text)}, which routes typing through a
text-fixing pre-pass to normalise dates, numbers, and currency formats
expected by site validators.

\subsection{Procedural Click: DOM Cache and Asynchronous Vision}
\label{sec:procedural}

A naive vision-first agent calls a vision model on every screenshot, every
click, every type---each call costing $\sim 800$ tokens and $3$--$5$ seconds
of vision-model latency. A human, by contrast, does not re-fixate the
\emph{Search} button between clicks; once the location is encoded, the next
press is a procedural action drawn from motor memory
\citep{anderson1982skill,logan1988automatic,fitts1967human}. \sys{}
realises the same separation: vision is the declarative path, DOM-cache
clicks are the procedural path, and a gating predicate decides which is
appropriate on each step.

\paragraph{Two click paths.}
The Worker exposes both:
\begin{description}[leftmargin=*,itemsep=2pt,topsep=2pt]
  \item[Declarative (vision-grounded):]
        \code{browser\_click\_at(vision\_index, target\_label)} resolves a
        \Vn{} bounding box against the cached \emph{vision epoch}---a frozen
        snapshot of the last vision-model response that the LLM saw in its
        message history. The snapper (\S\ref{sec:bbox}) maps the bbox to a
        viewport coordinate and the cascade (\S\ref{sec:cascade}) dispatches
        the click.
  \item[Procedural (DOM-direct):]
        \code{browser\_click(index)} and
        \code{browser\_click\_selector(css, target\_label)} reference a
        DOM-selector entry directly. No vision call is involved; the click
        flows straight from a cached selector map to the click cascade.
        This is the fast, cheap path used when ``where to click'' is
        already known.
\end{description}

\paragraph{The gating predicate.}
After every mutating action, the bridge computes three lightweight
fingerprints of the page state and compares them to the values captured at
the last vision call:
\begin{itemize}[leftmargin=*,itemsep=2pt,topsep=2pt]
  \item \textbf{DOM hash} $h_{\text{dom}}$: a SHA-1 over the interactive
        element list (count, tag histogram, sample of \texttt{aria-label}
        text). Sensitive to elements appearing, disappearing, or
        reordering.
  \item \textbf{DOM-text hash} $h_{\text{text}}$: a finer hash over the
        rendered text and a 100-pixel-bucketed scroll position. Sensitive
        to scroll movements and in-place text mutations.
  \item \textbf{Iframe signature} $h_{\text{frame}}$: a hash over iframe
        descendants, busted when iframe contents mutate without a
        top-level DOM change (e.g.~quiz advances, calculator updates).
\end{itemize}
Combined with a \texttt{mutation\_delta} counter (the number of DOM nodes
added/removed since last screenshot) and the current URL, these form the
\emph{cache key}. If the key matches, the previous vision epoch is still
authoritative, and a procedural click may proceed with no new perception;
if the key has drifted past tolerance, the agent must take a fresh
screenshot (which triggers a fresh vision pass), or wait for a background
prefetch to complete.

\paragraph{Asynchronous vision prefetch.}
A successful mutating action does not block on vision. Instead it
\emph{schedules} a background vision call to refresh the vision epoch for
whichever action comes next. Two timeouts gate the next click that needs
vision:
\begin{itemize}[leftmargin=*,itemsep=2pt,topsep=2pt]
  \item \textbf{Soft sync} ($1{,}500$~ms by default): if the prefetch
        completes within this window, the next click uses fresh bboxes; if
        not, the click proceeds on the previous epoch and the bridge
        annotates the result with a \texttt{[vision\_lag]} marker so the
        Worker can decide whether to re-screenshot.
  \item \textbf{Hard sync} ($8{,}000$~ms): used before critical actions
        (e.g.~a form submit on a page the agent has never seen) where
        stale vision is unacceptable. The click blocks until the prefetch
        returns or the timeout fires.
\end{itemize}
This pipelining means vision latency is paid \emph{in parallel} with the
LLM's next reasoning step, not in series with it.

\paragraph{Dead-click guard and toggle awareness.}
The DOM cache, like procedural motor memory, can become \emph{wrong}: a
button can be removed, an input disabled, a filter pre-applied. To prevent
the agent from re-firing the same action repeatedly with no effect, a
dead-click guard records the target XPath and resulting
\texttt{mutation\_delta} of each click; if the same target is fired twice
in a row with \texttt{mutation\_delta}~$= 0$, the second attempt is
refused and the Worker is told to re-screenshot. A toggle exemption fires
when the element's \texttt{is\_active} state flipped between attempts---a
filter being un-applied looks like no DOM mutation but is in fact a
legitimate action.

\begin{algorithm}[t]
\SetAlgoLined
\DontPrintSemicolon
\KwIn{action $a$, page state $s$, vision epoch $E$, prior fingerprint $\phi^{\text{prev}}$}
\KwOut{click executed or deferred}
\BlankLine
$\phi \leftarrow (h_{\text{dom}}(s),\, h_{\text{text}}(s),\, h_{\text{frame}}(s),\, u(s))$ \tcp*[r]{cache key}
$\Delta \leftarrow s.\text{mutation\_delta}$\;
\BlankLine
\uIf{\textnormal{$a$ is a procedural click}\,\textbf{and}\,$\phi = \phi^{\text{prev}}$\,\textbf{and}\,$\Delta < \theta$}{
  \tcp{Page state matches the vision epoch; click from procedural memory.}
  $e \leftarrow E.\text{selectorMap}[a.\text{index}]$\;
  \KwRet $\textsc{Dispatch}(e,\, \text{cascade})$ \tcp*[r]{no vision call}
}
\Else{
  \tcp{Page has drifted from the epoch; need fresh perception.}
  \uIf{$\textsc{PrefetchPending}(E)$}{
    $\textit{ok} \leftarrow \textsc{WaitOrTimeout}(E,\, \tau_{\text{soft}}{=}1500\,\text{ms})$\;
    \lIf{$\neg\,\textit{ok}$ \textnormal{and} $a$ \textnormal{is critical}}{$\textit{ok} \leftarrow \textsc{WaitOrTimeout}(E,\, \tau_{\text{hard}}{=}8000\,\text{ms})$}
  }
  \lIf{$\neg\,\textit{ok}$}{annotate result with $[\text{vision\_lag}]$}
  $E \leftarrow \textsc{LatestEpoch}()$\;
  \tcp{Now resolve via vision-grounded bbox snap (Alg.~\ref{alg:bbox}).}
  $\text{bbox} \leftarrow E.\text{lookup}(a.\text{vision\_index},\, a.\text{target\_label})$\;
  $(x,y) \leftarrow \textsc{SnapToInteractive}(\text{bbox})$\;
  \KwRet $\textsc{Dispatch}((x,y),\, \text{cascade})$\;
}
\BlankLine
\textbf{after a successful click}:\;
\Indp
\lIf{\textnormal{dead-click guard fired}}{refuse and request screenshot}
$\textsc{Schedule}(\textsc{VisionPrefetch}(s'))$ \tcp*[r]{async; overlaps with next LLM turn}
$\phi^{\text{prev}} \leftarrow \phi(s')$\;
\Indm
\caption{Vision-gated click: declarative vs.\ procedural path.}
\label{alg:vgc}
\end{algorithm}

\paragraph{Empirical effect.}
On a representative twenty-step shopping task the naive policy makes one
vision call per action (twenty calls). \sys{}'s gating reduces
this to roughly one vision call per three to five actions---five to seven
vision calls over the same trajectory---without measurable success-rate
loss. The savings compound: fewer vision tokens, lower wall-clock latency
(prefetch overlaps with LLM inference), and a smaller cache-invalidating
delta on each step, which itself improves the prompt-cache hit rate
(\S\ref{sec:memempirics}).

\section{Human-Inspired Memory}
\label{sec:memory}

This section is the centrepiece of the paper. We describe the Ledger
(\S\ref{sec:ledger}), the six-phase eviction loop (\S\ref{sec:eviction}), the
role-sliced views (\S\ref{sec:roleview}), the empirical behaviour
(\S\ref{sec:memempirics}), and the relationship to other memory-aware agents
(\S\ref{sec:memcompare}).

\subsection{Why Naive Accumulation Fails}
\label{sec:naivefail}

A typical browser-agent loop appends a screenshot ($\approx$1--3K tokens), an
element list ($\approx$30--80 rows), the prior tool result, and a state block on
every iteration. By iteration 20 the live context is dominated by stale
observation payload; the cache hit rate drops from $\sim$90\% on iteration~2 to
below $\sim$15\% by iteration~20; and---most damagingly---the model's
attention is increasingly drawn to obsolete state that no longer reflects the
page. The left panel of Figure~\ref{fig:memcompare} shows this effect on a
representative twenty-step task. The right panel shows the same trace under our
eviction loop: tokens stay near a fixed point.

\begin{figure}[t]
\centering
\includegraphics[width=\linewidth]{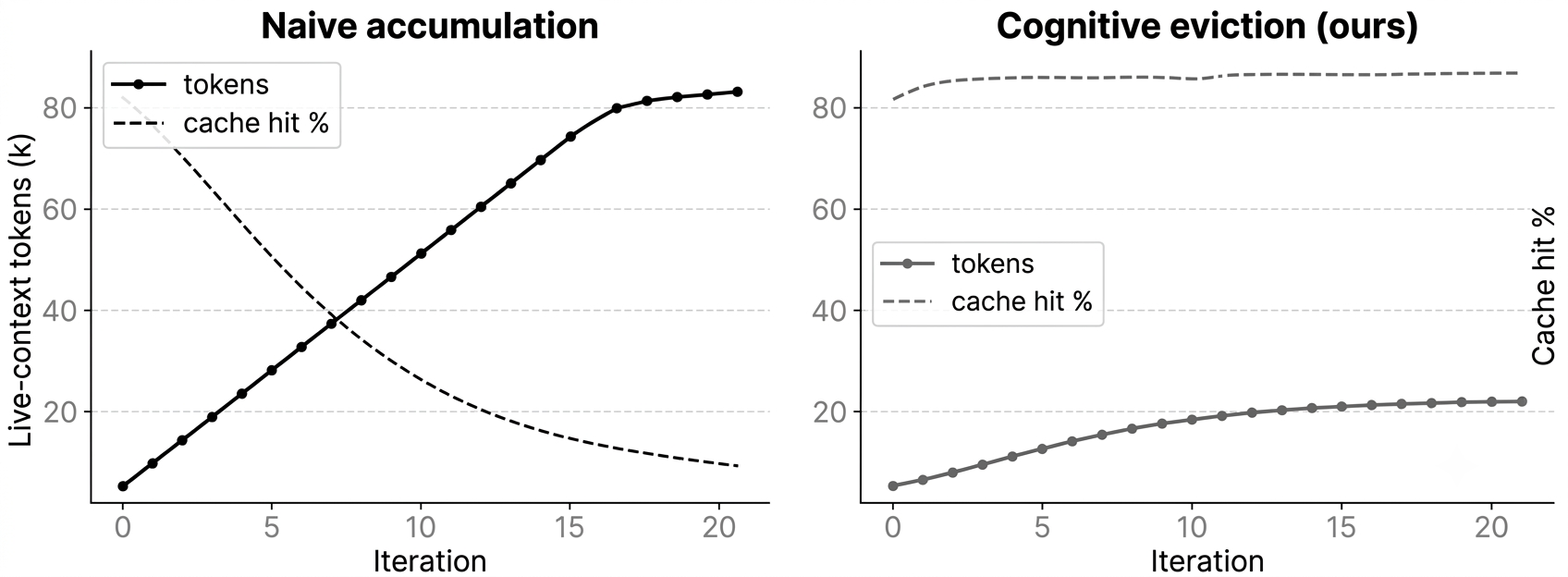}
\caption{Naive accumulation (left) versus cognitive eviction (right) on a
representative twenty-step task. Solid lines show live-context tokens; dashed
lines show prompt-cache hit rate. Naive accumulation roughly triples context
size and collapses cache hit by iteration~15; cognitive eviction holds both
near a fixed point.}
\label{fig:memcompare}
\end{figure}

\subsection{The Structured Ledger}
\label{sec:ledger}

The Ledger is a single per-task data structure with the following slots:

\begin{itemize}[leftmargin=*,itemsep=2pt,topsep=2pt]
  \item \code{goal}: the immutable task description (e.g.\ ``book a 3-day stay
        in Khulna on gozayaan.com, check-in April 23, under \$40/night'').
  \item \code{plan}: a list of subgoals, set at start by the Planner and
        revisable across the task.
  \item \code{subgoal}: the currently active subgoal.
  \item \code{recent}: a \textbf{bounded deque of length 3} of the most recent
        \code{StepOutcome} records. Older steps fall off the front
        automatically but are archived to disk.
  \item \code{facts}: a dictionary of structured key--value observations, each
        tagged with category (\code{observation}, \code{constraint},
        \code{preference}, \code{identity}, \code{credential}, \code{derived}),
        confidence, and source step. The most-recently-referenced facts are
        rendered first when the Ledger is read.
  \item \code{dead\_ends}: a list of negative landmarks, each with a one-line
        description, the URL where the failure happened, and a cause tag
        (\code{network}, \code{bot\_block}, \code{stale\_selector},
        \code{postcondition\_miss}, \code{policy\_block}, \code{rate\_limit},
        \code{handoff\_timeout}, \code{generic}).
  \item \code{checkpoints}: progress landmarks (navigation, login, filter
        applied, item added to cart) used to detect regressions when the
        agent navigates backward.
  \item \code{episodic}: free-form short notes for subgoal summaries and
        archival markers.
\end{itemize}

The Ledger is persisted incrementally: \code{ledger.json} is rewritten on every
update; \code{steps.jsonl}, \code{facts.jsonl}, and \code{episodic.jsonl} are
append-only. Replay and post-mortem analysis can reconstruct the entire task
trajectory from these files even if the live context has long since evicted
the relevant turns.

\subsection{Six-Phase Eviction Loop}
\label{sec:eviction}

Before every LLM call the Memory Hook runs Algorithm~\ref{alg:mem}, a sequence
of six idempotent passes over the live message list. Each phase corresponds to
a cognitive primitive:

\begin{algorithm}[t]
\SetAlgoLined
\DontPrintSemicolon
\KwIn{message list $M$, ledger $\mathcal{L}$}
\KwOut{evicted $M'$, updated $\mathcal{L}'$ (in place)}
\BlankLine
\tcp{Phase 1: keep last 2 screenshots, replace older with text marker}
$M \leftarrow \textsc{BackPatchScreenshots}(M, k{=}2)$\;
\tcp{Phase 2: keep last failure; collapse older into one-line dead-ends}
$\textit{collapsed} \leftarrow \textsc{CollapseFailedTools}(M, k{=}1)$\;
\ForEach{$c \in \textit{collapsed}$}{
  $\mathcal{L}.\textsc{MarkDeadEnd}(c.\text{reason}, c.\text{url}, c.\text{cause})$\;
}
\tcp{Phase 3: strip stale [SESSION\_STATE] blocks, keep same-URL ``stuck'' signal}
$M \leftarrow \textsc{CollapseStaleStateBlocks}(M, k{=}2)$\;
\tcp{Phase 4: strip thinking/reasoning blocks from old assistants}
$M \leftarrow \textsc{StripStaleThinkingBlocks}(M, k{=}3)$\;
\tcp{Phase 5: above 30 msgs, hard-evict content older than last 5 turns}
\If{$|M| > 30$}{
  $M \leftarrow \textsc{GutOldContent}(M, \text{keepTurns}{=}5)$\;
  $\mathcal{L}.\textsc{Note}(\text{``archived \_ messages at iter \_; ledger remains authoritative''})$\;
}
\tcp{Phase 6: keep last element list, collapse older}
$M \leftarrow \textsc{CollapseElementLists}(M, k{=}1)$\;
\BlankLine
$M \leftarrow \textsc{RefreshLedgerInSystemMessage}(M, \mathcal{L}.\textsc{RenderForLLM}())$\;
\KwRet $(M, \mathcal{L})$\;
\caption{Cognitive memory eviction (runs before every LLM call).}
\label{alg:mem}
\end{algorithm}

\begin{description}[leftmargin=*,itemsep=2pt,topsep=2pt]
  \item[Phase 1: Screenshot back-patch.] Keep only the two most recent
        screenshots; replace older image blocks with a single text marker
        ``[screenshot from prior turn evicted]''. Vision captions (the
        sibling text blocks that summarise what was on screen) are preserved,
        so the model still has access to the \emph{interpretation} of an old
        screen, just not its pixels. This mirrors the way human visual memory
        retains schematic gist long after raw imagery has decayed.
  \item[Phase 2: Failure collapse.] Keep only the most recent failed tool
        response verbatim; older failures collapse to a one-line cause and
        are simultaneously demoted to \code{dead\_ends} entries. This
        operationalises prediction P2 from \S\ref{sec:predictions}: failures
        are remembered, just not as full transcripts.
  \item[Phase 3: State-block eviction.] Strip \code{[SESSION\_STATE\ldots]}
        blocks from messages older than the last two turns, with one
        exception: state blocks at the same \code{(url, title)} as the
        current page are preserved as a ``you are stuck'' signal.
  \item[Phase 4: Thinking-block strip.] Clear the
        \code{thinking\_blocks} / \code{reasoning\_content} payloads of
        assistant messages older than the most recent three. Tool calls
        themselves are never stripped---this preserves the tool-use /
        tool-result pairing that the LLM API requires.
  \item[Phase 5: Old-message gut.] When the live message count exceeds 30,
        hard-evict the \code{content} of messages older than the last five
        turns, marking them \code{\{"\_archived": true\}}. The message
        \emph{count} is preserved (so downstream tooling that indexes by turn
        number still works), but the bodies become ``[archived]''. A single
        episodic note is appended to the Ledger documenting the eviction.
  \item[Phase 6: Element-list collapse.] Keep only the most recent
        \code{[ELEMENTS \dots]} block (typically the output of
        \code{browser\_list\_elements}); older blocks collapse to a one-line
        marker advising the model to re-list if needed.
\end{description}

After the six phases, the Hook refreshes the rendered Ledger inside the system
message. Because the Ledger sits in a stable block at the top of the prompt,
the system-message prefix remains cache-able even as it is updated; the
prompt-cache hit rate stays near $\sim$87\% across our traces.

\subsection{Declarative versus Procedural Tiers}
\label{sec:tiers}

The Ledger described above is the agent's \emph{declarative} memory: facts,
plans, dead-ends, episodic notes---all things the LLM can reason about in
words. \sys{} keeps a separate \emph{procedural} tier in the DOM-cache
subsystem (\S\ref{sec:procedural}): cached selector maps, vision-epoch
bboxes, and the page-state fingerprints that decide when a cached action
is still safe to fire. This tier is invisible to the LLM at the level of
words---the model does not ``recall'' selector indices---but it is what
turns most clicks into one-step procedural emissions rather than three-step
perceive-plan-act cycles. The split mirrors the cognitive distinction
between declarative knowledge (``the cancel button is the red one bottom
right'') and procedural skill (``my hand goes there'')
\citep{squire1992declarative,anderson2004actr}.

\subsection{Role-Sliced Views}
\label{sec:roleview}

The Ledger is rendered differently for different consumers. The
Orchestrator sees the full Ledger (goal, plan, all facts, all dead-ends, all
checkpoints) because routing decisions depend on the whole picture. The
Worker sees a sliced view: the active subgoal, the three most recent
\code{StepOutcome}s, the five most relevant facts (by most-recent-reference),
and the dead-ends scoped to the current URL. This is the analogue of
attentional gating: the operational layer is shielded from strategic context
it does not need.

\subsection{Empirical Behaviour}
\label{sec:memempirics}

In Figure~\ref{fig:memcompare} the right panel reports the actual measured
trace on a representative shopping task: live-context tokens stay between
$\sim$11K and $\sim$22K across twenty steps, with a slow drift caused by
genuine new facts being added to the Ledger; cache hit rate stays above
80\% throughout. By contrast a naive control run on the same task reaches
$\sim$80K tokens by iteration~20 with cache hit below 15\%. The relative
token saving is $\sim$50\% per iteration averaged across our internal
evaluation set, and is larger for longer tasks.

\subsection{Relationship to Reflective Memory}
\label{sec:memcompare}

Reflexion \citep{shinn2023reflexion} and Voyager \citep{wang2024voyager}
\emph{add} to the agent's memory at every turn---a self-critique, a learned
skill. \sys{} \emph{subtracts} at every turn while structuring what remains.
The two approaches are orthogonal: one could feed Reflexion-style critiques
into the Ledger's \code{episodic} list, or have Voyager-style skills be
recalled as facts. We do not test that combination here.

\section{Action Execution}
\label{sec:action}

Three components turn a Worker's high-level intent into a click that a real
website accepts: a bounding-box snapper that converts a \Vn{} reference into
viewport coordinates, a three-tier cascade that dispatches the click, and a
humanization layer that gives motion a plausible kinematic profile.

\subsection{Bounding-Box Snapping}
\label{sec:bbox}

A vision-emitted bounding box rarely lines up exactly with a single clickable
element. It may enclose multiple candidates, miss the true target by a few
pixels, or contain both a label and a small expand control. Our snapper
(Algorithm~\ref{alg:bbox}) is a three-phase resolver.

\begin{figure}[t]
\centering
\includegraphics[width=\linewidth]{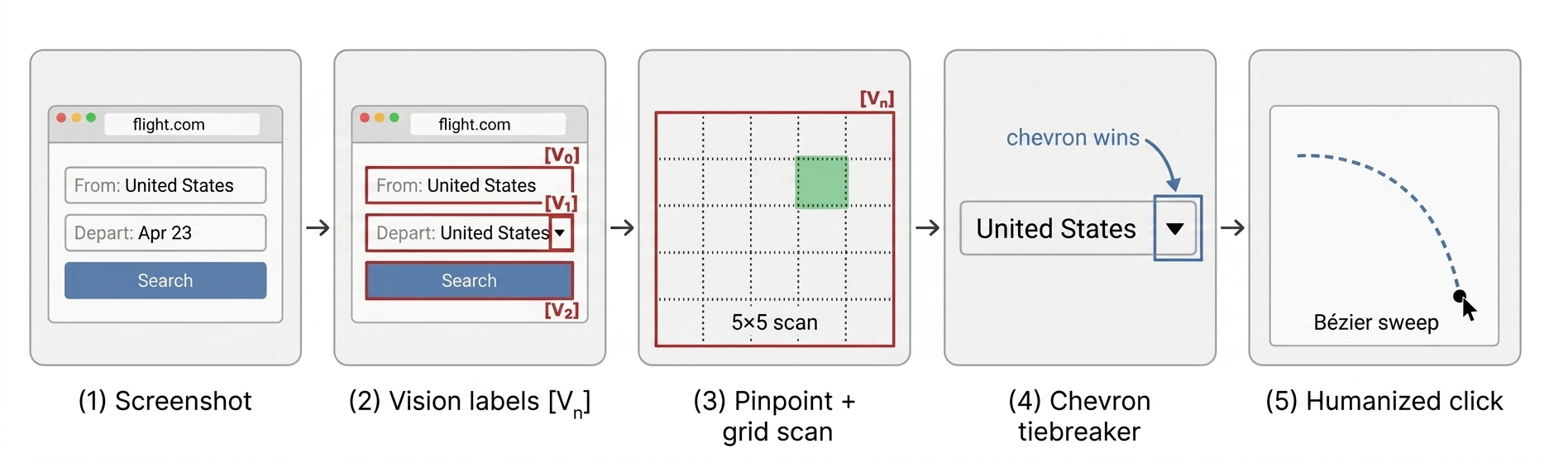}
\caption{Vision-to-action pipeline. (1) screenshot, (2) vision model emits
\Vn{} bboxes, (3) snapper pinpoint scan at the centre, (4) on compound rows
(e.g.\ \emph{United States}~$\blacktriangledown$) the chevron tiebreaker
prefers the small expand sub-element, (5) humanized cursor follows a B\'ezier
to the chosen point.}
\label{fig:bboxpipe}
\end{figure}

\begin{algorithm}[t]
\SetAlgoLined
\DontPrintSemicolon
\KwIn{vision bbox $B = (x_0, y_0, x_1, y_1)$, expected label $\ell$ (optional)}
\KwOut{$(x, y, \text{snapped}, \text{xpath})$ or failure}
\BlankLine
$(c_x, c_y) \leftarrow ((x_0+x_1)/2, (y_0+y_1)/2)$\;
\tcp{Phase A: pinpoint at centre}
$\textit{stack} \leftarrow \texttt{document.elementsFromPoint}(c_x, c_y)$\;
$e \leftarrow \textsc{WalkFront2Back}(\textit{stack}, \text{interactive})$\;
\If{$e$ \textnormal{is an} \texttt{<iframe>} \textnormal{(Phase A-alt)}}{
  \tcp{Same-origin iframe descent}
  \If{$e.\texttt{contentDocument}$ \textnormal{accessible}}{
    $B' \leftarrow B - (e.\text{rect}.\text{left}, e.\text{rect}.\text{top})$\;
    $e' \leftarrow \textsc{PinpointInner}(e.\texttt{contentDocument}, B')$\;
    \If{$e' \ne \emptyset$}{
      $(c_x, c_y) \leftarrow \textsc{TranslateToViewport}(e'.\text{rect}, e.\text{rect})$\;
      $e \leftarrow e'$ ; \textnormal{record iframe chain}\;
    }
  }
}
\If{$\ell$ \textnormal{provided and} $|\ell| \ge 3$}{
  \If{$\textsc{LabelMatch}(e, \ell)$}{
    \KwRet $(c_x, c_y, \text{true}, e.\text{xpath})$\;
  }
}
\tcp{Phase B: $5{\times}5$ grid scan with chevron tiebreaker}
$\textit{best} \leftarrow \emptyset$;\;
\For{$i \leftarrow 1$ \KwTo $4$}{
  \For{$j \leftarrow 1$ \KwTo $4$}{
    $(p_x, p_y) \leftarrow (x_0 + (x_1-x_0)\frac{i}{5},\; y_0 + (y_1-y_0)\frac{j}{5})$\;
    $h \leftarrow \textsc{FirstInteractiveAt}(p_x, p_y)$\;
    \lIf{$h = \emptyset$}{\textbf{continue}}
    $a \leftarrow \textsc{IntersectionArea}(h.\text{rect}, B)$\;
    $\lambda \leftarrow \textsc{LabelScore}(h, \ell)$ \tcp*[r]{leniency by widget type}
    $\chi \leftarrow \textsc{ChevronWeight}(h, B)$ \tcp*[r]{aria-expanded, $\blacktriangledown\blacktriangleright\blacktriangleleft\vdots$, ``expand''}
    $\sigma \leftarrow a \cdot \lambda \cdot \chi$\;
    \lIf{$\sigma > \textit{best}.\sigma$}{$\textit{best} \leftarrow (h, p_x, p_y, \sigma)$}
  }
}
\lIf{$\textit{best} = \emptyset$}{\KwRet $\textit{failure}$}
\KwRet $(\textit{best}.p_x, \textit{best}.p_y, \text{true}, \textit{best}.h.\text{xpath})$\;
\caption{Vision-bbox snap with chevron tiebreaker.}
\label{alg:bbox}
\end{algorithm}

\textbf{Phase A (pinpoint).} Sample the bounding-box centre and walk
\code{document.elementsFromPoint(cx,cy)} front-to-back, taking the first
interactive ancestor. If the expected label matches the snapped element's text
or \code{aria-label}, return the centre directly.

\textbf{Phase A-alt (iframe descent).} If the element at the centre is an
iframe, attempt to access its \code{contentDocument}. On a same-origin iframe
we recompute the bounding box in iframe-local coordinates, repeat the
pinpoint scan inside the inner document, and translate the result back to
viewport space. On a cross-origin iframe we flag the case so the Worker can
escalate to a Puppeteer Frame walk.

\textbf{Phase B (grid scan).} If the centre falls on empty space or the
expected label does not match, sample a $5\times5$ grid across the bounding
box, score each interactive candidate by intersection area $\times$ label
match $\times$ chevron weight, and return the highest-scoring point. The
composite score lets a small but well-labelled candidate beat a large but
unlabelled one.

\subsection{The ``Small Arrow Beside a Large Label'' Problem}
\label{sec:chevron}

A recurring failure mode in vision-grounded agents is the compound row: a
filter row whose visible text is, say, \emph{``United States''}, and whose
right edge contains a tiny ``$\blacktriangledown$'' chevron that, when clicked,
expands a list of specific regions. The vision model returns a single bounding
box spanning the whole row; a naive snapper clicks the centre, lands on the
text label, and nothing happens---the label itself is not interactive, only
the chevron is. The agent re-screenshots, sees no change, retries, and loops.

Our snapper resolves this with a \emph{chevron tiebreaker}. On row-shaped
bounding boxes (width~$\ge$~60~px, height~$\ge$~24~px) we boost the score of
candidates whose attributes or text content match an expand-control pattern:
the \code{aria-expanded} attribute is present, \code{aria-haspopup} is set,
the text content contains one of the chevron glyphs
$\blacktriangledown\blacktriangleright\blacktriangleleft\blacktriangle$
(or their hollow variants), or the \code{aria-label} mentions ``expand'',
``collapse'', ``toggle'', or ``more''. The boost is multiplicative (factor up
to $3$) but capped so that it only wins when the intersection area is within
$30\%$ of the area-only leader; this preserves the leader for non-compound
rows. The same mechanism also handles compound-row \emph{splitting} at the
vision-pipeline layer: when the vision model merges a chip and its
chevron into one box, the bridge splits the box into two before exposing
it to the LLM (\S\ref{sec:vision}). Predictably, this fixes prediction P3 from
\S\ref{sec:predictions} in the cases we observe.

\subsection{Three-Tier Click Cascade}
\label{sec:cascade}

A click that lands at the right viewport coordinate is necessary but not
sufficient: it must also be dispatched in a way the page accepts. Many
modern sites refuse non-trusted clicks (\code{event.isTrusted = false}) or
require a real \code{mousedown}/\code{mouseup} pair, not a synthetic
\code{el.click()}. We therefore use a three-tier cascade
(Figure~\ref{fig:cascade}).

\begin{figure}[t]
\centering
\includegraphics[width=\linewidth]{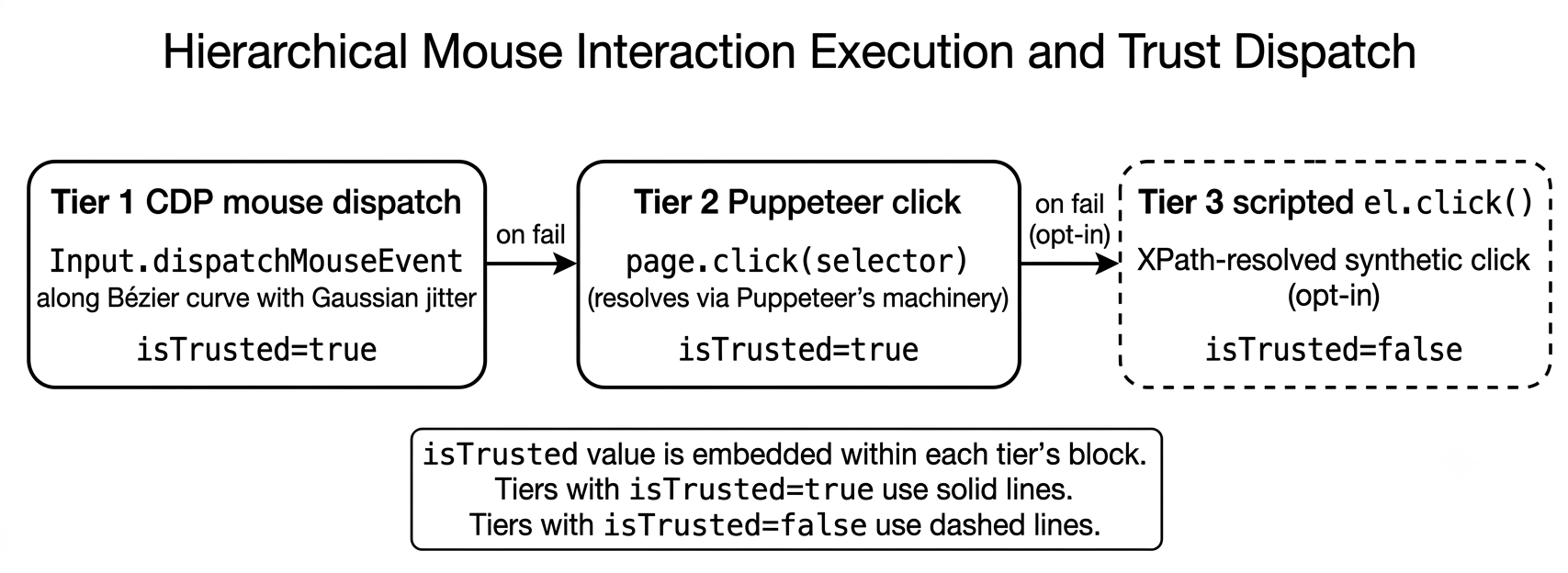}
\caption{Three-tier click cascade. Tier~1 dispatches via the Chrome DevTools
Protocol with a humanized B\'ezier sweep, preserving \code{isTrusted=true}.
Tier~2 falls back to Puppeteer's selector-based \code{page.click}, still
trusted. Tier~3 is a scripted \code{el.click()} via XPath; it is opt-in only
because \code{isTrusted=false} is a bot-detection signal.}
\label{fig:cascade}
\end{figure}

\textbf{Tier 1: CDP mouse dispatch.} The default. Resolves the target element
via a CSS selector, verifies via \code{document.elementsFromPoint} that the
intended element is on top, sweeps the cursor along a B\'ezier curve with
Gaussian jitter from the previously known cursor position
(Algorithm~\ref{alg:humotion}), and dispatches a
\code{mousePressed}/\code{mouseReleased} pair via CDP. \code{isTrusted=true}.

\textbf{Tier 2: Puppeteer click.} Triggered if Tier~1 fails for selector or
visibility reasons. Calls Puppeteer's \code{page.click(selector)}, which
also produces a trusted click but resolves the element via Puppeteer's own
machinery rather than ours; useful when our selector differs.

\textbf{Tier 3: scripted \code{el.click()}.} Last resort, opt-in via an
explicit flag. Resolves the element by XPath and calls
\code{el.click()} synchronously. \code{isTrusted=false}, which is a
detectable signal on bot-aware sites; for this reason Tier~3 is disabled by
default on domains whose learnings file marks them as bot-protected.

Algorithm~\ref{alg:cascade} formalises the cascade. After every attempt the
agent records which tier worked and feeds it back to the per-domain
learnings store, so subsequent tasks on the same domain start at the
empirically best tier.

\begin{algorithm}[t]
\SetAlgoLined
\DontPrintSemicolon
\KwIn{target $t$ (CSS selector or $(x,y)$ coords), policy flags (humanize, allowUntrusted)}
\KwOut{$\textsc{ToolResult}\{\,\text{success},\, \text{tried}[\,],\, \text{reason},\, \text{alternatives}\,\}$}
\BlankLine
$\textsc{Tried} \leftarrow [\,]$\;
\BlankLine
\tcp{\textbf{Tier 1}: CDP mouse dispatch (\texttt{isTrusted}=\textsc{true})}
$\textsc{Tried}.\text{append}(\texttt{cdp})$\;
$p \leftarrow \textsc{ProbeElement}(t)$\;
\lIf{$\neg\,p.\text{found} \;\vee\; p.\text{disabled} \;\vee\; \neg\,p.\text{visible}$}{\KwRet $(\bot,\, \textsc{Tried},\, p.\text{reason})$}
\lIf{$\neg\,p.\text{inViewport}$}{$\textsc{ScrollIntoView}(t,\,\text{block}{=}\text{centre})$\,;\, \textsc{Wait}($150\,\text{ms}$)}
$(c_x, c_y) \leftarrow \textsc{Centre}(p.\text{rect})$\;
$v \leftarrow \textsc{ValidateTopMost}(c_x, c_y, t)$ \tcp*[r]{\texttt{elementsFromPoint} stack check}
\lIf{$v = \texttt{covered}$}{\KwRet $(\bot,\, \textsc{Tried},\, \texttt{element\_covered},\, \textsc{Suggest}(t))$}
\lIf{$v = \texttt{ambiguous}$}{\KwRet $(\bot,\, \textsc{Tried},\, \texttt{selector\_ambiguous})$}
\If{$\textnormal{humanize}$}{
  $\textsc{Path} \leftarrow \textsc{HumanizedMotion}(\textsc{LastCursor}(),\, (c_x, c_y))$ \tcp*[r]{Alg.~\ref{alg:humotion}}
  $\textsc{ReplayPath}(\textsc{Path})$\;
  $\textsc{Wait}(\mathcal{N}(100, 50)\,\text{ms})$\;
  $(c_x, c_y) \leftarrow (c_x, c_y) + \mathcal{N}(\mathbf{0}, 2^2 \mathbf{I})$ \tcp*[r]{$\pm 2\,$px centring jitter}
}
$\textsc{CDP.dispatch}(\texttt{mousePressed},\, (c_x, c_y))$\;
$\textsc{Wait}(\mathcal{N}(90, 25)\,\text{ms})$\;
$\textsc{CDP.dispatch}(\texttt{mouseReleased},\, (c_x, c_y))$\;
\lIf{$\textsc{MutationDelta}() > 0 \;\vee\; \textsc{UrlChanged}()$}{\KwRet $(\top,\, \textsc{Tried},\, \emptyset)$}
\BlankLine
\tcp{\textbf{Tier 2}: Puppeteer selector click (\texttt{isTrusted}=\textsc{true})}
$\textsc{Tried}.\text{append}(\texttt{puppeteer})$\;
$\textsc{ok} \leftarrow \textsc{Puppeteer.waitForSelector}(t,\, \text{timeout}{=}5\,\text{s})$\;
\lIf{$\neg\,\textsc{ok}$}{\KwRet $(\bot,\, \textsc{Tried},\, \texttt{stale\_selector},\, \textsc{Suggest}(t))$}
$\textsc{Puppeteer.click}(t)$\,;\, $\textsc{Wait}(1.5\,\text{s})$\;
\lIf{$\textsc{MutationDelta}() > 0$}{\KwRet $(\top,\, \textsc{Tried},\, \emptyset)$}
\BlankLine
\tcp{\textbf{Tier 3}: scripted \texttt{el.click()} (\texttt{isTrusted}=\textsc{false}; opt-in)}
\uIf{$\textnormal{allowUntrusted}$}{
  $\textsc{Tried}.\text{append}(\texttt{js})$\;
  $e \leftarrow \textsc{ResolveXPath}(t)$\;
  \lIf{$\neg\,\textsc{InViewport}(e)$}{$\textsc{ScrollIntoView}(e,\, \text{block}{=}\text{nearest})$}
  $e.\text{click}()$ \tcp*[r]{warning: anti-bot signal}
  \lIf{$\textsc{MutationDelta}() > 0$}{\KwRet $(\top,\, \textsc{Tried},\, \emptyset)$}
}
\BlankLine
$\textsc{LearningsStore}.\textsc{Record}(\textsc{Domain}(),\, \textsc{Tried},\, \texttt{failed})$\;
\KwRet $(\bot,\, \textsc{Tried},\, \texttt{click\_silent},\, \textsc{Suggest}(t))$\;
\caption{Three-tier click cascade with \texttt{isTrusted} preservation.}
\label{alg:cascade}
\end{algorithm}

\subsection{Motor Humanization}
\label{sec:humanize}

The motion humanizer applies four effects: B\'ezier mouse paths with Gaussian
control-point jitter, distance-proportional move timing with a sine-eased
velocity profile, optional terminal-deceleration wobble, and pre-click
hesitation drawn from $\mathcal{N}(100~\text{ms}, 50~\text{ms})$. Typing is
similarly humanized: per-character delays of $30$--$120$~ms, longer pauses
after spaces and punctuation, a 10\% chance of a $200$--$600$~ms word-pause,
and a 2\% chance of a simulated typo followed by backspace and correction.
Sliders and drag targets (chess pieces, custom range widgets) use a sigmoid
velocity profile, with a 5--15\% chance of overshoot-and-correct on long
drags. Algorithm~\ref{alg:humotion} formalises the cursor-motion generator
in both \textsc{click} and \textsc{drag} modes.

\begin{algorithm}[t]
\SetAlgoLined
\DontPrintSemicolon
\KwIn{start $\mathbf{p}_0 = (x_0, y_0)$, end $\mathbf{p}_T = (x_T, y_T)$, mode $\in \{\textsc{click}, \textsc{drag}\}$}
\KwOut{ordered waypoint sequence $\bigl((x_i, y_i, \Delta t_i)\bigr)_{i=1}^{N}$ for CDP dispatch}
\BlankLine
$d \leftarrow \|\mathbf{p}_T - \mathbf{p}_0\|_2$;\quad $N \leftarrow \max\!\bigl(\lceil d / 12 \rceil,\, 12\bigr)$\;
$T \leftarrow \textnormal{clamp}\!\bigl(d \cdot 1.8,\; 100,\; 800\bigr)\;\text{ms}$ \tcp*[r]{distance-proportional duration}
$\mathbf{p}_T \leftarrow \mathbf{p}_T + \mathcal{N}(\mathbf{0}, \sigma_T^2 \mathbf{I})$,\;
$\sigma_T = \mathbb{1}[\textsc{click}] \cdot 2 + \mathbb{1}[\textsc{drag}] \cdot 0$ \tcp*[r]{drag releases on exact target}
\BlankLine
\tcp{\textbf{Step 1.} B\'ezier control points with Gaussian perpendicular jitter}
$\mathbf{u} \leftarrow (\mathbf{p}_T - \mathbf{p}_0) / d$;\quad $\mathbf{n} \leftarrow (-u_y, u_x)$ \tcp*[r]{unit tangent and normal}
$\alpha_1 \sim \mathcal{U}(0.25, 0.40)$;\quad $\alpha_2 \sim \mathcal{U}(0.60, 0.80)$\;
$\beta_1, \beta_2 \sim \mathcal{N}(0,\, (0.12\,d)^2)$ \tcp*[r]{arc amplitude}
$\mathbf{c}_1 \leftarrow \mathbf{p}_0 + \alpha_1 (\mathbf{p}_T - \mathbf{p}_0) + \beta_1\, \mathbf{n}$\;
$\mathbf{c}_2 \leftarrow \mathbf{p}_0 + \alpha_2 (\mathbf{p}_T - \mathbf{p}_0) + \beta_2\, \mathbf{n}$\;
\BlankLine
\tcp{\textbf{Step 2.} Velocity profile: sine-easing for click, sigmoid for drag}
\eIf{$\textnormal{mode} = \textsc{click}$}{
  $\phi(t) \leftarrow \tfrac{1}{2}\bigl(1 - \cos(\pi t)\bigr)$ \tcp*[r]{slow $\to$ fast $\to$ slow}
}{
  $\phi(t) \leftarrow \sigma(8(t - 0.5))$ \tcp*[r]{logistic; emphasises deceleration}
}
\BlankLine
\tcp{\textbf{Step 3.} Emit cubic-B\'ezier waypoints with variable timing}
\For{$i \leftarrow 1$ \KwTo $N$}{
  $\tau \leftarrow \phi(i / N)$\;
  $\mathbf{p}_i \leftarrow (1-\tau)^3 \mathbf{p}_0 + 3(1-\tau)^2 \tau\, \mathbf{c}_1 + 3(1-\tau) \tau^2\, \mathbf{c}_2 + \tau^3 \mathbf{p}_T$\;
  $\Delta t_i \leftarrow \tfrac{T}{N}\bigl(1 + 0.4\,(1 - \phi'(i/N))\bigr) + \mathcal{U}(0, 5)\,\text{ms}$\;
  \lIf{$\textnormal{mode} = \textsc{drag}\;\textbf{and}\;\mathcal{U}(0,1) < 0.05$}{$\Delta t_i \mathrel{+}{=} \mathcal{U}(40, 120)\,\text{ms}$}\tcp*[r]{micro-pause}
}
\BlankLine
\tcp{\textbf{Step 4.} Terminal deceleration wobble (click only)}
\If{$\textnormal{mode} = \textsc{click}$}{
  \For{$k \leftarrow 1$ \KwTo $\textsc{Choose}(\{2, 3\})$}{
    $\mathbf{w}_k \leftarrow \mathbf{p}_T + \mathcal{N}(\mathbf{0}, 1.2^2 \mathbf{I})$;\;
    $\textnormal{emit}\;(\mathbf{w}_k,\, \mathcal{U}(8, 18)\,\text{ms})$\;
  }
}
\BlankLine
\KwRet $\bigl((x_i, y_i, \Delta t_i)\bigr)_{i=1}^{N}$\;
\caption{Humanized cursor motion: B\'ezier path with Gaussian jitter and mode-dependent velocity.}
\label{alg:humotion}
\end{algorithm}

Three worked cases tested the humanizer end-to-end. \emph{Chase IRA
calculator}: the slider is a custom \code{<mds-slider>} inside a
cross-origin iframe; the snapper detects the iframe, the humanizer drives a
sigmoid drag on the slider thumb, and the value-set helper fires a
\code{change} event on the shadow host. \emph{Coolmath4kids quiz}: inputs
live in a same-origin iframe; CDP click on the iframe host did nothing, but
\code{frame.evaluate}-driven value-set followed by \code{change} dispatch
succeeded. \emph{Chess puzzle drag}: small drag distances ($\sim$60~px),
where the sigmoid profile reduces detector false-positives.

\subsection{Direct Navigation as a Recovery Path}
\label{sec:navigate}

The click pipeline above drives a page from the inside; a complementary
recovery path sidesteps it entirely. Reaching a target \emph{state}---a search
scoped to a city, narrowed by a handful of filters---by driving the UI is a
long chain of vision-grounded clicks: open the location field, type, accept an
autocomplete, open each filter dropdown, select an option, apply. Every link in
that chain is a place a vision-driven click can dead-end, target a stale \Vn{}
index, or loop, and on filter-heavy sites it is the single most brittle stretch
of a task.

Most such sites, however, already encode that state in their URL---the location
as a path segment, the filters as query parameters. When the Worker can name
the target state as a URL, \code{browser\_navigate(session\_id, url)} moves the
session there in one deterministic hop, collapsing the entire filter chain into
a single call. The rabbit-adoption diagnostic of \S\ref{sec:modelfamilies} is a
clean example: its target---young male English Spot rabbits in Chicago---is
reachable as
\code{/search/rabbits-for-adoption/us/il/chicago/?breed=English\%20Spot\&age=Young},
and the most economical Worker we observed reached the answer with exactly one
such navigate rather than a dozen filter clicks. This is much of the mechanism
behind the tool-economy gap of \S\ref{sec:modelfamilies}: a Worker that
recognises a site's URL grammar finishes in a fraction of the calls.

\textbf{Guards.} Direct navigation is powerful enough to be dangerous, so the
tool is fenced. \emph{Domain pinning} restricts navigation to the target domain
and its subdomains plus a minimal OAuth/CDN safe-list and explicitly refuses a
pivot to a general search engine---without it, a frustrated Worker turns every
hard task into a search-engine scrape. A \emph{challenge guard} refuses
re-navigating to a URL that just returned a Cloudflare interstitial (re-issuing
the \code{goto} would only re-trigger the same challenge) and routes the Worker
to the solver first. \emph{Regression detection} flags a navigate back to an
already-visited URL, so the Worker repairs the current page instead of
restarting. A $4xx$/$5xx$ or anti-bot response short-circuits before the Worker
interacts with a dead shell, and a hard block auto-escalates the session from
the Tier-1 Puppeteer backend to the Tier-3 undetected-Chromium backend
(\S\ref{sec:cascade}). On every successful hop the vision epoch is
invalidated---the previous page's \Vn{} indices no longer apply---and a fresh
vision pass is prefetched, so the next action grounds against the page that
actually loaded.

\textbf{The double edge.} Navigation rescues cleanly only when the URL matches
the site's real structure. A Worker that fabricates a plausible-but-wrong URL
lands on an empty or mis-scoped page, and unless it notices it can report a
confident, hallucinated result---fabricated multi-parameter filter URLs are the
dominant navigate failure mode in general. The model study
(\S\ref{sec:modelfamilies}) shows the fragility even on a run where every model
ultimately succeeded: one Worker briefly navigated to a non-standard location
path (\code{/illinois/chicago/} where the site serves \code{/us/il/chicago/})
and another to a different country entirely, recovering only because the
post-navigate re-grounding surfaced the wrong page and prompted a correction.
The guards above bound the blast radius---no leaving the domain, no loops, no
interacting with a blocked shell---and the forced re-grounding catches a
mis-scoped hop, but URL \emph{fidelity} is still the Worker's responsibility,
which is why direct navigation is most reliable taken \emph{after} the page's
URL grammar is known (read off an earlier in-UI interaction) rather than guessed
cold.

\section{Experimental Evaluation}
\label{sec:eval}

We evaluate \sys{} on the Mind2Web Hard subset---66 long-horizon real-website
tasks designed to stress generalisation across domains and UI styles
\citep{deng2023mind2web}. Our primary metric is task-level success: a task
is considered successful only if every required action subsequence executed
correctly and the final outcome matches the reference. Partial credit is
\emph{not} given.

\subsection{Setup}
\label{sec:setup}

The Worker and Planner are powered by a single frontier reasoning model;
the Orchestrator uses the same model. The vision tier is served by a
small, cost-efficient multimodal model (any flash-tier vision model with
native bounding-box output suffices, e.g.\ a Gemini-3-Flash-class model
for cost efficiency; see \S\ref{sec:modelfamilies} for a discussion of
which model families work and which do not). Each task runs with a
$50$-step budget and a $5$-minute wall-clock budget. Captcha challenges are routed to a 2Captcha
Turnstile solver when detected; if the solver fails twice the task is
escalated to human handoff (such handoffs are counted as failures for the
purposes of the success metric reported here).

\subsection{Main Result}
\label{sec:mainresult}

Figure~\ref{fig:results} compares \sys{} against twenty-one published
baselines on the same 66-task subset. \sys{} attains
\textbf{$89.47\%$} task success, ranking third overall behind two
proprietary closed-API systems and ahead of every published open or
research browser-agent baseline by a wide margin.

\begin{figure}[t]
\centering
\includegraphics[width=\linewidth]{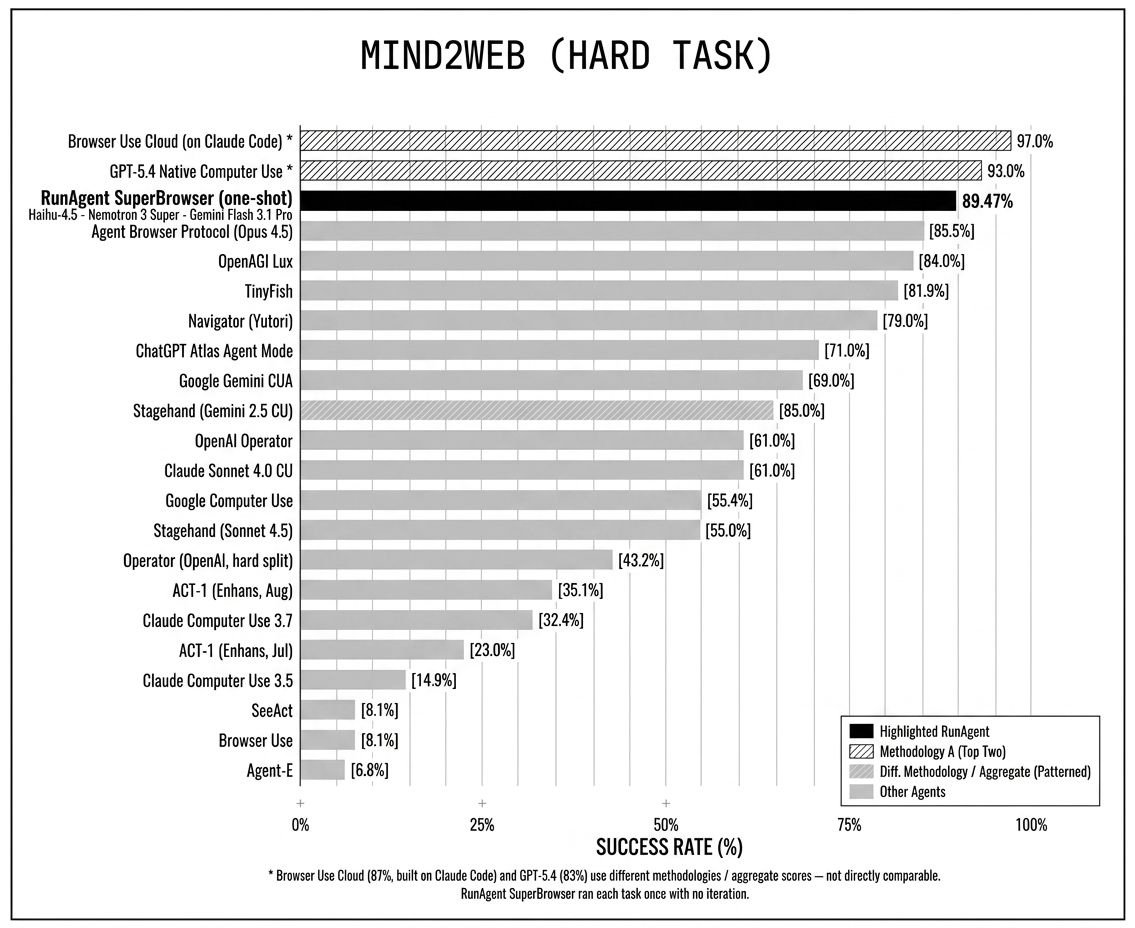}
\caption{Task success on Mind2Web Hard (66 tasks). \sys{} is highlighted in
the third position. Bars marked $\star$ denote proprietary closed-API
systems. The next-best published open / research browser agent is at
$8.1\%$, separated from \sys{} by more than eighty points.}
\label{fig:results}
\end{figure}

The most striking feature of the result is not our absolute rank but the
gap to the best published \emph{research} baseline: the next-best
non-proprietary, non-frontier-model entry is SeeAct at $8.1\%$, leaving
\sys{} more than $81$~points ahead. We attribute the gap to the joint
effect of the cognitive design contract---bounded context, structured
recall, sub-element-preferred snapping, and humanized motion---rather
than to any single component; the component analysis below is designed to
test this reading.

\subsection{Full-System Instrumentation and Ablation Plan}
\label{sec:ablations}

The cleanest test of each component is a counterfactual single-component
ablation---remove one mechanism, hold the model, vision tier, and budgets
fixed, and re-run the suite. That sweep is left to future work; here we
report the \emph{measured} behaviour of the full system on the diagnostic
task and the per-mechanism instrumentation it exposes.
Table~\ref{tab:ablation} summarises a representative full run (Claude
Opus~4.8 Worker on a web-navigation task, $41$ Worker iterations): the
task succeeds, and the cognitive memory subsystem holds the live-context
prompt bounded at ${\sim}53.5$K input tokens per iteration while firing $46$
eviction events and re-injecting the structured Ledger every turn. The
per-mechanism eviction yield is broken out in Appendix~\ref{app:metrics}
(Table~\ref{tab:evictmetrics}).

\begin{table}[t]
\caption{Full-system instrumented behaviour (Claude
Opus~4.8 Worker on a representative web-navigation task; measured over $41$
Worker iterations). A counterfactual single-component ablation sweep---removing one
mechanism at a time and re-running---is left to future work
(\S\ref{sec:ablations}); here each mechanism's \emph{measured} activity in the
full system is reported instead. Per-mechanism eviction detail is in
Appendix~\ref{app:metrics}.}
\label{tab:ablation}
\begin{center}
\small
\begin{tabular}{lr}
\toprule
\textbf{Full-system measurement} & \textbf{Value} \\
\midrule
Task success (LLM judge)               & Yes (verified listings) \\
Worker iterations                      & $41$ \\
Executed tool calls                    & $27$ \\
Live-context input tokens / iteration  & ${\sim}53.5$K \ ($46$--$59$K, bounded) \\
\midrule
Memory-eviction events fired           & $46$ \ (gut $27\,\cdot\,$failure $14\,\cdot\,$state $4\,\cdot\,$subgoal $1$) \\
Messages archived by eviction          & $70$ \\
Structured-Ledger re-injections        & $41$ \ (${\sim}1{,}080$ tokens each) \\
\bottomrule
\end{tabular}
\end{center}
\end{table}

The planned ablation sweep sets out to test the three predictions of
\S\ref{sec:predictions}. \textbf{Removing memory eviction} should let
context grow linearly with steps, increasing both tokens and hallucination
on long tasks (P1, P2)---the full-system traces already show eviction doing
the work that would otherwise let that growth happen
(Table~\ref{tab:evictmetrics}). \textbf{Removing the chevron tiebreaker}
should cause systematic failures on compound-row UIs such as filter
dropdowns and region selectors (P3). \textbf{Restricting clicks to Tier~1
only} should be brittle on bot-protected sites that block CDP click on
specific selectors; \textbf{disabling humanization} should trigger detection
on those same sites despite identical target coordinates.

\subsection{United States vs.\ Chinese Frontier Models: A Tool-Economy Gap}
\label{sec:modelfamilies}

The system contract is intentionally model-agnostic: any chat-completion
model with tool-calling support and reasonable instruction-following can
in principle drive the Worker. Holding the entire \sys{} scaffolding
fixed---same vision tier, same memory hook, same click cascade, same
prompt templates---and swapping only the Worker/Planner model across
eight recent frontier models (four from US laboratories, four from
Chinese laboratories) surfaces a consistent and, to us, surprising split.
It is not a split in \emph{whether} the task gets done, but in
\emph{how much tool work} a model spends to do it.
Figures~\ref{fig:modeltools-heatmap} and~\ref{fig:modeltools-density}
report the per-model tool economy on a representative web-navigation task
(only the Worker model varies). We present this as a single-task case
study: holding the task fixed and varying only the Worker isolates the
model effect cleanly, but the magnitude of the gap on a single task is
illustrative rather than a population estimate---a multi-task sweep that
would turn this into a distributional claim is left to a future revision.

\textbf{The economy gap.} On a fixed scaffold all eight models clear the
task: each returns a correct, verifiable listing (we hand-checked every
final answer). The outcome is a tie---success tells the two groups apart
nowhere. Where they diverge is the \emph{tool budget}. The US
Workers---Anthropic's \code{Claude-Opus-4.8} ($27$ tool calls), NVIDIA's
\code{Nemotron-3-super-120b} ($32$), and Google's \code{Gemini-3.5-Flash}
and OpenAI's \code{GPT-5.4} ($33$ each)---reach the answer in $31$ calls
on average. The recent large Chinese Workers---\code{Ring-2.6} and
MiniMax's \code{MiniMax-M3} ($36$ each), DeepSeek's \code{DeepSeek-V4}
($40$), and Moonshot's \code{Kimi-K2.6} ($53$)---average $41$, roughly a
third more tool calls ($1.3\times$) for the same result. The spread inside
the gap is what makes the point: \code{Claude-Opus-4.8} finishes in $27$
calls while \code{Kimi-K2.6} takes nearly twice as many ($53$), even
though both clear the task, and even though the Chinese models routinely
match or exceed the US models on standard reasoning, code, and math
leaderboards. Raw intelligence buys the right answer; it does not buy
\emph{tool economy}.

\begin{figure}[t]
\centering
\includegraphics[width=\linewidth]{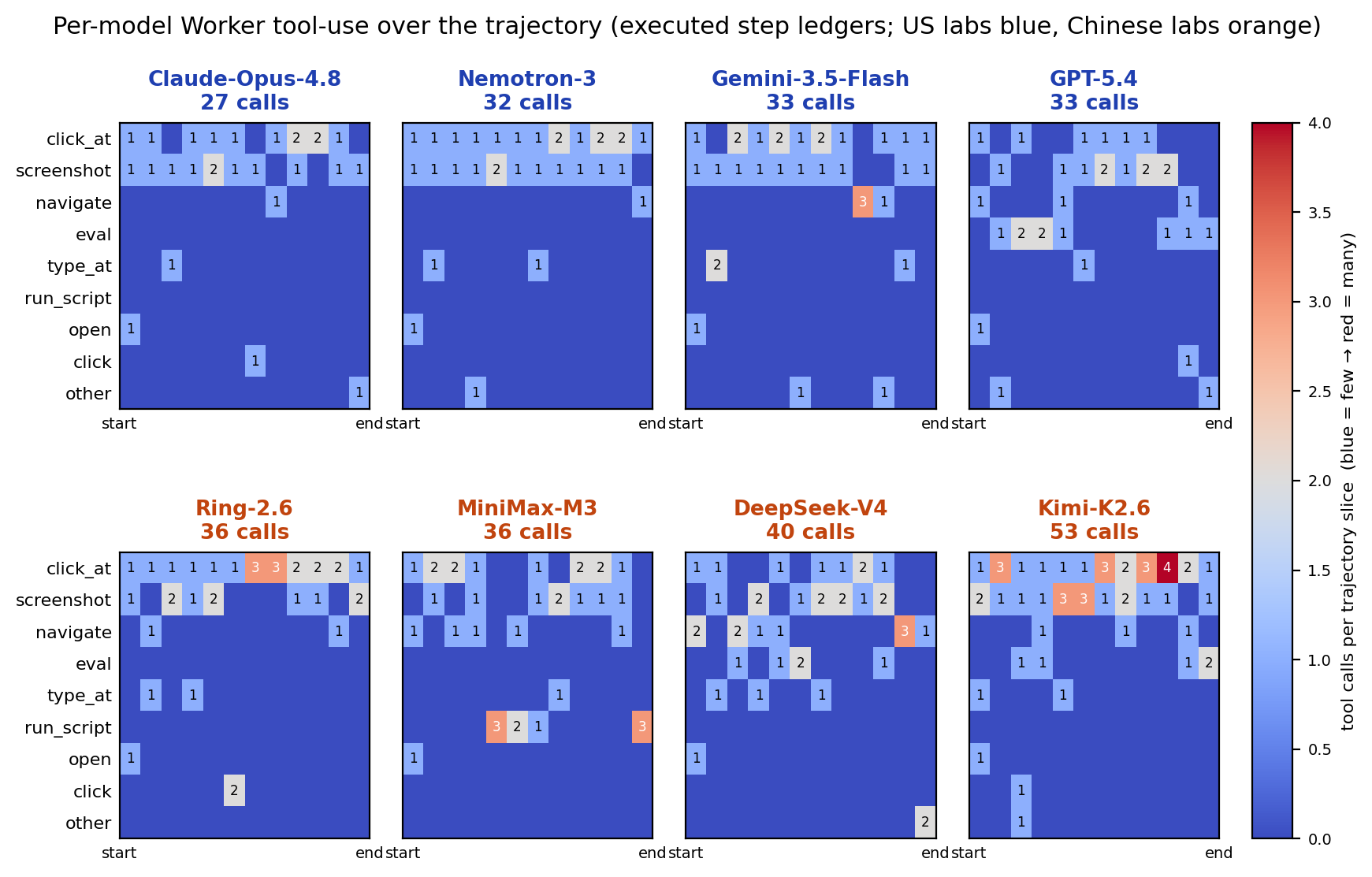}
\caption{Per-model Worker tool-use \emph{over the trajectory} on a
representative web-navigation task (only the Worker model varies), from the
executed step ledgers (\code{steps.jsonl}). One
panel per model---US labs (blue titles) on top, Chinese labs (orange) below,
ordered by total calls (shown in each title). Within a panel the $y$-axis is
the executed browser tool type and the $x$-axis is the run sliced into equal
fractions of its trajectory (start${\to}$end); each cell counts that tool's
calls in that slice, on one shared blue${\to}$red scale (blue${=}$few,
red${=}$many). US Workers front-load a short burst of vision-grounded calls
(\code{browser\_click\_at}/\code{browser\_screenshot}) and then taper, finishing
in $32$--$33$ calls; the recent large Chinese models instead sustain the vision
path across the whole trajectory---Kimi-K2.6 keeps re-issuing
\code{browser\_click\_at} (red, up to four per slice) from start to end, reaching
$53$ calls. Scaffolding, vision tier, prompts, and budgets are held fixed.}
\label{fig:modeltools-heatmap}
\end{figure}

\begin{figure}[t]
\centering
\includegraphics[width=\linewidth]{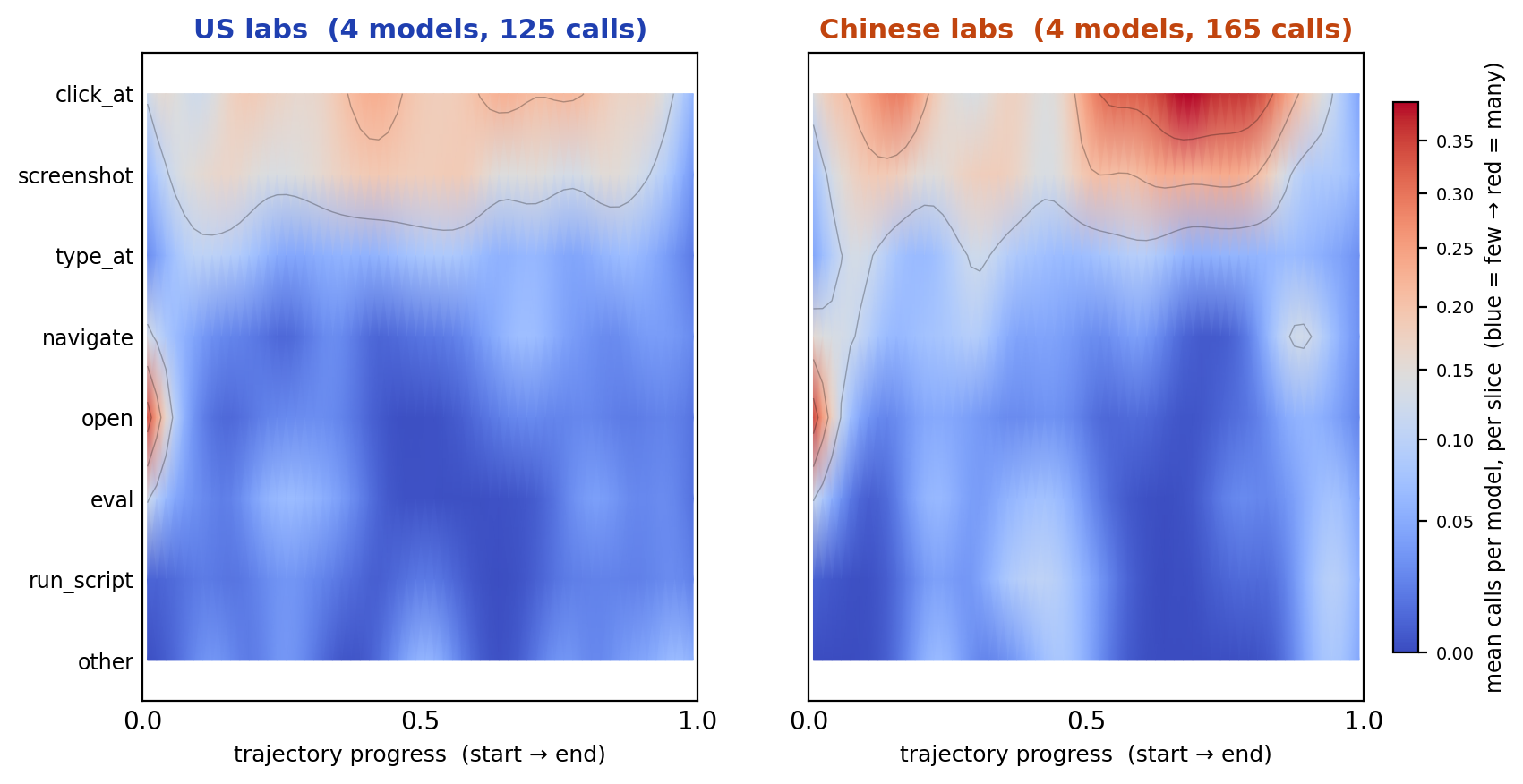}
\caption{Smoothed density of \emph{where} Worker tool calls land, pooled by lab
on a representative web-navigation task (executed step ledgers;
per-model averaged, then Gaussian-smoothed; the $y$-axis is an ordinal,
vision-grouped tool index, so this is a smoothed 2D histogram drawn as filled
contours, not a continuous-kernel density). The $x$-axis is trajectory progress
(start${\to}$end); colour is mean calls per model per slice (blue${=}$few,
red${=}$many) on a shared scale. The Chinese panel carries a markedly hotter
\code{browser\_click\_at}/\code{browser\_screenshot} band that intensifies
through the back half of the run---the redundant vision spend of
\S\ref{sec:modelfamilies}---whereas the US panel is cooler and flatter. Only
the Worker model varies.}
\label{fig:modeltools-density}
\end{figure}

\textbf{Where the extra calls go.} The surplus concentrates in the
\emph{vision-grounded} path. \code{Kimi-K2.6}, the extreme case, issues
$23$ \code{browser\_click\_at} calls and $17$ \code{browser\_screenshot}s,
re-grounding pages it has already perceived instead of acting on what it
already holds; Figures~\ref{fig:modeltools-heatmap}
and~\ref{fig:modeltools-density} show this as a hot
\code{browser\_click\_at}/\code{browser\_screenshot} band that stays lit
across the whole trajectory, where the US panels are comparatively sparse
and cool. The US Workers instead fall back to the cheap procedural path
once the page state has not changed since the last vision
call---\code{GPT-5.4}'s nine \code{browser\_eval} calls are exactly this
move---as \S\ref{sec:procedural}'s gating predicate intends. The pattern
is consistent across the Chinese Workers: each spends its surplus
re-issuing declarative, vision-grounded clicks where a cheaper procedural
action on an already-grounded element would do---precisely the redundant
vision spend the gating predicate exists to avoid.

\textbf{What it is not.} The gap is not one of raw capability: every model
succeeded on the task, and several of the Chinese models match or exceed
the US models on broad reasoning, code, and math benchmarks. It is not context length---every
candidate's context window exceeds our live-context usage by an order of
magnitude---nor multilingual capacity, since the entire task surface here
is English UI text. Our working hypothesis is \emph{post-training
distribution}: US labs appear to have invested more post-training compute
on stable agentic tool-use---function-calling fidelity over many turns,
re-using an already-grounded perception instead of re-perceiving it, and
the ``execute the registered tool, don't re-ground what you already
hold'' contract---than the recent generation of large Chinese models,
which optimise more heavily for chat, reasoning, and code-completion
benchmarks. The practical consequence is that on a fixed scaffolding
budget an ``intelligent-but-profligate'' Worker spends materially more
tool calls---and therefore more vision cost and wall-clock latency---than
a ``less-flashy-but-economical'' one to reach the same result. A natural
test of this reading---which we leave to future work---is whether
prepending a short \emph{schema-reminder} that merely restates the tool
contract (exact names, 1-based rotating \Vn{} indices, prefer the
procedural click on an already-grounded element) tightens the Chinese
models' tool economy while leaving the already-economical US models
unchanged. A prompt that adds no capability---only a restatement of the
contract---should help only if the gap is one of discipline rather than
raw ability.

\textbf{Implication for \sys.} Because the cognitive contract is the
scaffolding, not the model, this finding does not invalidate the
architecture; if anything it strengthens the argument that
\emph{discipline}---of context, of tool calls, of perception---is what
makes long-horizon tasks cheap and stable, not merely solvable. The
corollary for users is practical: when selecting a Worker model, agentic
tool-use \emph{economy} should weigh as heavily as headline reasoning
scores, since on a fixed scaffold the two diverge by a third or more in
tool cost at equal success.

\subsection{Qualitative Traces}
\label{sec:qualitative}

We highlight three failure modes that the cognitive design specifically
addresses; full trace dumps are in Appendix~\ref{app:traces}.

\textbf{Region selector (chevron tiebreaker).} On a flight-search site, the
``Departure airport'' filter row contains the text ``United States'' and a
narrow ``$\blacktriangledown$'' chevron. The first vision pass returns a
single bounding box spanning the entire row. Without the chevron
tiebreaker, the snapper clicks the text label centre and nothing happens.
With the tiebreaker, the snapper scores the chevron candidate at $\sigma =
\text{area}_{\text{chev}} \times 3.0$ versus $\sigma_{\text{label}} =
\text{area}_{\text{label}} \times 1.0$ and prefers the chevron; the
dropdown opens.

\textbf{Cross-origin slider (Chase IRA).} The Chase retirement calculator
hosts its slider widget inside a cross-origin iframe loaded from
\code{static.chasecdn.com}; the slider is a Lit web component
(\code{<mds-slider>}) with hidden \code{<input type="range">} inside an
open shadow root. The snapper detects the iframe at Phase A-alt; the
humanizer dispatches a sigmoid-velocity drag on the thumb; a shadow-DOM
helper fires a \code{change} event on the host. Without humanization the
slider validates as a bot; without shadow-DOM piercing the value never
propagates.

\textbf{Iframe value-set (coolmath4kids).} The quiz inputs sit inside a
same-origin iframe; CDP click on the iframe host did not focus the inner
input. We route through \code{frame.evaluate} + native value setter,
followed by a \code{change} dispatch, which is more robust than synthesised
clicks here.

\section{Discussion and Limitations}
\label{sec:discussion}

\textbf{Compute cost.} Each task averages $11$ vision calls and $\sim$22
LLM calls under our $50$-step budget. The asynchronous vision prefetch
hides the vision call behind the next LLM call, so wall-clock cost grows
with LLM latency, not vision latency. We have not measured FLOPs or
dollars in this paper; the framework is independent of the underlying
LLM, so users can trade quality for cost by swapping models. As noted in
\S\ref{sec:modelfamilies}, the practical model-selection axis is
agentic tool-use fidelity rather than raw reasoning capacity, so the
cheapest disciplined Worker often outperforms a more expensive but
less-disciplined one.

\textbf{Dependence on a vision API.} Cold-start tasks---a page on which
the vision model returns no bounding box above the confidence
threshold---force the Worker to fall back to DOM enumeration. This is
slow and brittle. In our internal traces the cold-start fallback rate is
about $4\%$ of pages; on those pages the success rate drops by roughly
$15$ points relative to the vision-available case. A vision model with
better recall on minimal-UI or non-Latin-script pages would help most.

\textbf{Cognitive analogy as motivation, not claim.} We are not claiming
that \sys{} is in any way conscious or that LLMs ``think''. The
contribution is the architecture: \emph{the bounded-context structured
ledger with role-sliced views and six-phase eviction works
experimentally}, whether or not the underlying model is in any sense
cognitive. The analogy is a generative source for the design and a
mnemonic for the reader; it is not a load-bearing claim of equivalence.

\textbf{Reproducibility.} The system is implemented in TypeScript
(browser layer) and Python (Nanobot bridge and orchestration). The
Mind2Web Hard evaluation was run on commit \code{cbfed0d} of
the public repository. We will release the eval harness alongside the
camera-ready paper.

\textbf{Open questions.} Three directions seem most promising. First,
the Ledger schema is currently hand-designed; learning what to remember
on a per-domain basis is an open problem. Second, the chevron
tiebreaker generalises to compound rows but not to all sub-element
ambiguities (e.g.\ nested cards); a more general saliency-based snapper
is desirable. Third, the cognitive analogy makes a fourth prediction we
have not tested: that interrupting and resuming a task should be near
free if the Ledger is the only state. We leave that experiment for
future work.

\section{Conclusion}
\label{sec:conclusion}

We presented \sys, a web-navigation agent designed against an explicit
cognitive contract: perception is candidate generation, cognition is
split between a slow strategic loop and a fast operational loop, memory
is a bounded structured ledger from which observations are systematically
evicted, and action is a humanized cascade that prefers small interactive
sub-elements over large text labels. On Mind2Web Hard, \sys{} attains
$89.47\%$ task success and outpaces every published open or research
browser-agent baseline by more than $80$ points. The result suggests that
context \emph{discipline}---deciding what \emph{not} to remember---is at
least as important as context \emph{capacity}, and that web agents
designed with a working-memory analogy in mind degrade more gracefully
on long-horizon tasks than agents that simply accumulate.

The system and evaluation harness will be open-sourced. We hope the
structured-Ledger interface and the six-phase eviction protocol are
useful as a drop-in building block for other agentic systems.

\bibliography{bibliography}
\bibliographystyle{iclr2026_conference}

\appendix
\section{Complete Browser-Tool Taxonomy}
\label{app:tools}

The Worker has access to 36 browser tools, organised into nine functional
categories. The two most important for the body of the paper are
\code{browser\_click\_at} and \code{browser\_type\_at}, which route through
the bounding-box snapper and the text-fix pre-pass respectively.

\begin{table}[h]
\caption{All 36 browser tools available to the Worker, by category.}
\label{tab:tools}
\begin{center}
\small
\begin{tabularx}{\linewidth}{lXl}
\toprule
\textbf{Category} & \textbf{Tool} & \textbf{Layer} \\
\midrule
Navigation     & \code{navigate}, \code{search\_google}, \code{go\_back} & Motor \\
\midrule
Interaction    & \code{click\_element}, \code{input\_text}, \code{select\_option}, & Motor \\
               & \code{send\_keys}, \code{get\_dropdown\_options}, \code{select\_dropdown\_by\_text} & \\
\midrule
Scroll         & \code{scroll\_down}, \code{scroll\_up}, \code{scroll\_to\_percent}, & Motor \\
               & \code{scroll\_to\_top}, \code{scroll\_to\_bottom}, \code{scroll\_to\_text}, & \\
               & \code{scroll\_pixels}, \code{scroll\_within}              & \\
\midrule
Tabs           & \code{open\_tab}, \code{switch\_tab}, \code{close\_tab}    & Motor \\
\midrule
Control        & \code{done}, \code{wait}                                  & Cognition \\
\midrule
Advanced       & \code{handle\_dialog}, \code{upload\_file},               & Motor \\
               & \code{evaluate\_script}, \code{run\_script}              & \\
\midrule
Extraction     & \code{extract\_markdown}, \code{export\_pdf},             & Perception \\
               & \code{dom\_search}, \code{wait\_for\_condition},          & \\
               & \code{get\_console\_errors}, \code{get\_accessibility\_tree} & \\
\midrule
Captcha        & \code{detect\_captcha}, \code{screenshot\_captcha},       & Mixed \\
               & \code{solve\_captcha\_visual}                             & \\
\midrule
Geo            & \code{detect\_geo\_block}                                 & Perception \\
\bottomrule
\end{tabularx}
\end{center}
\end{table}

Each tool returns a structured \code{ToolResult\{success, reason, tried[],
alternatives, error\}}. The \code{tried} list records which click-cascade
tiers were attempted (e.g.\ \code{['cdp','puppeteer']}), and
\code{alternatives} provides the LLM with concrete next moves so it does not
need to re-screenshot just to learn what to try next. This contract holds
across all interactive tools.

\section{Eviction Metrics}
\label{app:metrics}

Table~\ref{tab:evictmetrics} reports the memory subsystem's per-mechanism
eviction activity, measured directly from the Worker event ledger on a
representative run (Claude Opus 4.8 Worker on a web-navigation task, $41$
Worker iterations). Counts are exact---each row is an
event type the memory hook emits when it fires---so the table records
\emph{how often} each mechanism acts and \emph{how much} it removes or
re-injects per firing. Per-phase \emph{token} attribution is not separately
instrumented; what is instrumented, and reported here, is the firing count,
the items removed per firing, and the net effect on live-context size.

\begin{table}[h]
\caption{Per-mechanism eviction activity over $41$ Worker iterations of a
representative run (Claude Opus 4.8 on a web-navigation task). ``Removed /
firing'' is in the unit named in the last column. The structured Ledger
re-injects the compacted authoritative state every turn, which is what makes
the aggressive gutting safe.}
\label{tab:evictmetrics}
\begin{center}
\small
\begin{tabular}{lccl}
\toprule
\textbf{Mechanism} & \textbf{Firings} & \textbf{Removed / firing} & \textbf{What it strips / adds} \\
\midrule
Old-content gut         & $27$ & $2.6$ messages & turns past the window $\to$ short markers \\
Failure collapse        & $14$ & $1.7$ failures & older failures $\to$ one-line cause \\
State-block eviction    & $4$  & $1$ block      & older \code{[SESSION\_STATE]} blocks stripped \\
Subgoal compaction      & $1$  & $83$ messages  & finished subgoal $\to$ one debrief line \\
\midrule
Ledger re-injection     & $41$ & $\sim\!1{,}080$ tokens & compact authoritative state re-sent each turn \\
\bottomrule
\end{tabular}
\end{center}
\end{table}

The net effect is a bounded live prompt: total input tokens per iteration stay
in the $46$--$59$K band (mean $\sim\!53.7$K) across the run rather than growing
with step count---$46$ gut/collapse/compaction events fire over the $41$
iterations to hold that envelope while the Ledger (re-injected every turn at
$\sim\!1{,}080$ tokens) preserves the state those evictions would otherwise
lose. (Prompt-caching ratios are provider-dependent; this run was served
without cache reuse, so we report raw input tokens.)

\section{Qualitative Trace Excerpts}
\label{app:traces}

We include short trace fragments from the three worked cases in
\S\ref{sec:qualitative}. Bracketed numbers reference the Worker step in the
underlying \code{steps.jsonl}.

\subsection{Region selector chevron (flight search)}

\begin{quote}\small\itshape
[12] vision: \Vidx{4} \emph{``United States $\blacktriangledown$''} bbox=(304,212,612,244).
[12] worker $\to$ \code{browser\_click\_at(vision\_index=4, target\_label="United States")}.
[12] snapper: phase A pinpoint $\to$ static text node, not interactive. Phase B
grid scan: candidate (a)~\code{<button aria-expanded="false">} score
$\sigma=\text{area}_{a}\cdot 3.0$; candidate (b)~\code{<span>United States</span>}
score $\sigma=\text{area}_{b}\cdot 1.0$. Chevron tiebreaker selects (a).
[12] cdp click at (584,228) $\to$ dropdown opens, 51 regions revealed.
\end{quote}

\subsection{Cross-origin slider (Chase IRA calculator)}

\begin{quote}\small\itshape
[17] vision: \Vidx{9} \emph{``Salary slider thumb''} bbox=(421,580,468,612).
[17] snapper: phase A $\to$ \code{<iframe src="static.chasecdn.com/...">};
Phase A-alt iframe descent: \code{contentDocument} accessible; inner pinpoint
$\to$ \code{<mds-slider>} custom element with hidden \code{<input type="range">}
in open shadow root. Strategy: shadow-DOM piercing via \code{\_\_sb\_queryDeep}.
[17] humanizer: sigmoid drag from (442,596) to (608,596), $\sim$520~ms total,
$5$~micro-pauses, $1$~overshoot+correction at $14\%$.
[17] post-action: native \code{change} event fired on shadow host; recomputed
display reads ``\$110,000''. Constraint ``salary $\ge$ \$100k'' satisfied.
\end{quote}

\subsection{Same-origin iframe value-set (coolmath4kids)}

\begin{quote}\small\itshape
[8] worker $\to$ \code{browser\_type\_at(vision\_index=3, text="42")}.
[8] snapper: \Vidx{3} resolves to \code{<input type="number">} inside same-origin
iframe \code{<iframe src="/quiz/embed.html">}. CDP type on iframe host loses focus.
[8] fallback: route through \code{frame.evaluate}: locate input, call
\code{nativeInputValueSetter.call(input, "42")}, dispatch \code{InputEvent}
and \code{ChangeEvent}. Validator accepts; ``Submit'' button enables.
\end{quote}

\subsection{Worker-slot failure traces (Chinese models)}
\label{app:modelsplit-traces}
Representative Worker tool calls from Chinese-model runs and the system's
response, illustrating the three failure modes of \S\ref{sec:modelfamilies}.
Auto-generated by \code{eval/figures/make\_appendix\_traces.py}.


\emph{(Trace listings will appear here once Chinese-model eval runs are available.)}

\section{Verb-Classifier Routing Algorithm}
\label{app:routing}

The full pseudocode of the Orchestrator's verb classifier
(\S\ref{sec:routing}) is given in Algorithm~\ref{alg:routing}.

\begin{algorithm}[t]
\SetAlgoLined
\DontPrintSemicolon
\KwIn{instructions $I$, optional URL $u$}
\KwOut{$(\text{kind} \in \{\textsc{browser}, \textsc{search}\}, \text{confidence})$}
\BlankLine
$s_{\text{br}} \leftarrow 0$;\quad $s_{\text{se}} \leftarrow 0$\;
\lIf{$I$ matches \textsc{ActionVerbs}}{$s_{\text{br}} \mathrel{+}= 0.80$}
\lIf{$I$ matches \textsc{TransactionalPatterns} $\wedge$ \textsc{DateIndicators}}{$s_{\text{br}} \mathrel{+}= 0.90$}
\lIf{$I$ matches \textsc{ExplicitTargetVerbs}}{$s_{\text{br}} \mathrel{+}= 0.85$}
\lIf{$I$ matches \textsc{VisualOnly}}{$s_{\text{br}} \mathrel{+}= 0.80$}
\lIf{$I$ matches \textsc{BrandNames}}{$s_{\text{br}} \mathrel{+}= 0.70$}
\lIf{$I$ matches \textsc{AggregationVerbs} $\wedge$ \textsc{PluralContext}}{$s_{\text{se}} \mathrel{+}= 0.70$}
\lIf{$I$ matches \textsc{FactualLookup} $\wedge \neg\textsc{VisualOnly}$}{$s_{\text{se}} \mathrel{+}= 0.60$}
\BlankLine
\If{$u \ne \emptyset$}{
  $\ell \leftarrow \textsc{LoadDomainLearnings}(\textsc{Domain}(u))$\;
  \If{$\ell.\text{preferred}$ \textnormal{exists} $\wedge\; \ell.\text{confidence} > 0.8$}{
    \KwRet $(\ell.\text{preferred},\; \ell.\text{confidence})$\;
  }
}
$\text{kind} \leftarrow \textbf{if } s_{\text{br}} \ge s_{\text{se}} \textbf{ then } \textsc{browser} \textbf{ else } \textsc{search}$\;
\KwRet $(\text{kind},\; \max(s_{\text{br}}, s_{\text{se}}))$\;
\caption{Verb-classifier task routing.}
\label{alg:routing}
\end{algorithm}

In our implementation, the pattern families are concrete regular expressions
maintained in the orchestrator's source file. The classifier is deterministic
and contributes negligible latency relative to LLM inference; its main role
is to refuse to route lookup queries to the (expensive) browser worker.

\section{Mathematical Formalism}
\label{app:formal}

This appendix collects formal definitions for the mechanisms described
informally in the main text: the cognitive memory operator
(\S\ref{app:eviction-formal}), the bounding-box snap scoring function
(\S\ref{app:snap-formal}), the three-role loop expressed as a constrained
Markov decision process (\S\ref{app:mdp-formal}), a lower bound on prompt-cache
hit rate (\S\ref{app:cache-formal}), and an information-foraging derivation of
the verb-classifier router (\S\ref{app:scent-formal}). Nothing here is needed
to read the main paper; the goal is to make the system's invariants and
optimisation criteria precise enough to be falsified or improved upon.

\subsection{Notation and preliminaries}
\label{app:notation}

Let $\mathcal{T}$ be the set of user tasks, $\mathcal{S}$ the set of page
states (a DOM tree, a viewport screenshot, and a console transcript), and
$\mathcal{A}$ the action space (the $36$ browser tools in
Appendix~\ref{app:tools}). A run of horizon $H$ produces a trajectory
$\tau = (s_0, a_0, s_1, a_1, \ldots, s_H) \in (\mathcal{S} \times \mathcal{A})^{H}$.

Let $\mathcal{M}$ be the space of LLM message lists (sequences of typed
messages with roles in $\{\textsf{system}, \textsf{user}, \textsf{assistant},
\textsf{tool}\}$). Let $T : \mathcal{M} \to \mathbb{N}$ denote token count.
Let $\Sigma$ be the space of vision bounding boxes:
\begin{equation}
  \Sigma \;=\; \bigl\{\, (x_0, y_0, x_1, y_1, \ell, c, \mathbf{m}) \,:\, x_0 < x_1,\, y_0 < y_1,\, c \in [0,1] \,\bigr\},
\end{equation}
where $\ell$ is a short label, $c$ is the vision model's confidence, and
$\mathbf{m}$ is the DOM-enriched metadata vector containing the matching
DOM index, \texttt{aria-expanded} state, active-state flag, resolved
\texttt{aria-labelledby} text, and the parent expand-control index
(\S\ref{sec:vision}).

The Ledger space is a Cartesian product of typed slots:
\begin{equation}
\mathcal{L}
\;=\;
\underbrace{\Sigma^*_{\text{goal}}}_{\text{goal}}
\times
\underbrace{(\Sigma^*)^*}_{\text{plan}}
\times
\underbrace{\mathcal{O}_{\le 3}}_{\text{recent}_3}
\times
\underbrace{\mathcal{F}}_{\text{facts}}
\times
\underbrace{\mathcal{D}^*}_{\text{deadEnds}}
\times
\underbrace{\mathcal{C}^*}_{\text{checkpoints}}
\times
\underbrace{(\Sigma^*)^*}_{\text{episodic}}\,,
\end{equation}
where $\mathcal{O}_{\le k}$ denotes the type of bounded deques of length
at most $k$ over step-outcomes, $\mathcal{F}$ is a finite map from string
keys to facts with category, confidence, and most-recently-referenced
timestamp, and $\mathcal{D}, \mathcal{C}$ are types for dead-end and
checkpoint records as in \S\ref{sec:ledger}.

\subsection{The cognitive eviction operator}
\label{app:eviction-formal}

Each of the six eviction phases (\S\ref{sec:eviction}) is a function
$E_i : \mathcal{M} \times \mathcal{L} \to \mathcal{M} \times \mathcal{L}$.
The composite cognitive eviction operator is
\begin{equation}
E \;=\; E_6 \circ E_5 \circ E_4 \circ E_3 \circ E_2 \circ E_1.
\label{eq:E-composite}
\end{equation}

\paragraph{Idempotence.} Each $E_i$ is idempotent: $E_i \circ E_i = E_i$.
Phases $E_1, E_3, E_4, E_6$ act on disjoint message kinds (image blocks,
state blocks, thinking blocks, element-list blocks respectively) and
therefore pairwise commute. Phase $E_2$ writes to the ledger but does not
re-read evicted material. Phase $E_5$ subsumes the others on its acted-on
turns. Hence
\begin{equation}
E \circ E \;=\; E,
\label{eq:E-idempotent}
\end{equation}
which is what lets the hook run safely on every iteration.

\paragraph{Token-bound invariant.} Let $\bar{m}_{\text{step}}$ be the
average per-step message size (state block + tool result + vision caption +
worker response). After $t$ steps,
\begin{equation}
T(M_t) \;\le\;
\underbrace{T(M_{\text{sys}})}_{\text{stable prefix}}
\;+\;
\underbrace{k \cdot \bar{m}_{\text{step}}}_{\text{live window}}
\;+\;
\underbrace{O(\log t)}_{\text{archived markers}},
\label{eq:token-bound}
\end{equation}
with $k = 5$ in our deployment (Phase 5 keeps the last $5$ turns). The
naive control loop has $T(M_t) = T(M_{\text{sys}}) + t \cdot \bar{m}_{\text{step}}$,
\emph{linear} in $t$. The empirical realisation of
\eqref{eq:token-bound} is the right-hand panel of
Figure~\ref{fig:memcompare} in \S\ref{sec:memempirics}.

\paragraph{Ledger conservation.} Define semantic content of the Ledger as
the disjoint union of its typed slots:
$\|\mathcal{L}\|_\text{sem} := |\text{facts}| + |\text{deadEnds}| +
|\text{checkpoints}| + |\text{episodic}| + |\text{all\_steps}|$.
Eviction never removes from this measure:
\begin{equation}
  \|\,(E(M, \mathcal{L}))_{\mathcal{L}}\,\|_\text{sem}
  \;\ge\;
  \|\mathcal{L}\|_\text{sem}.
\label{eq:ledger-monotone}
\end{equation}
Each phase either leaves the ledger unchanged (Phases 1, 3, 4, 5, 6) or
\emph{adds} entries to it (Phase 2 emits a DeadEnd whenever it collapses
a failure). Thus the live window shrinks while the ledger grows
monotonically.

\subsection{Bounding-box snap as a scoring function}
\label{app:snap-formal}

Given a vision bounding box $B \in \Sigma$ with optional expected label
$\ell^\star$, and the set of candidate interactive elements
$\{e_1, \ldots, e_n\}$ with rectangles $R_i$ and attribute vectors
$\boldsymbol\alpha_i$, the snapper returns
\begin{equation}
e^\star \;=\; \arg\max_{i \in [n]} \,\sigma(e_i, B, \ell^\star),
\label{eq:argmax-snap}
\end{equation}
under the composite score
\begin{equation}
\sigma(e, B, \ell^\star)
\;=\;
a(R, B) \cdot \lambda(e, \ell^\star) \cdot \chi(e, B),
\label{eq:sigma-snap}
\end{equation}
with the three factors defined as follows.

\paragraph{Area factor.} The viewport-area intersection ratio
\begin{equation}
a(R, B) \;=\;
\frac{\text{area}(R \cap B)}{\text{area}(B)} \in [0, 1].
\end{equation}

\paragraph{Label factor.} A type-dependent label match,
\begin{equation}
\lambda(e, \ell^\star)
\;=\;
\begin{cases}
\mathbb{1}\!\left[\textsc{Tokens}(\ell^\star) \subseteq \textsc{TextTokens}(e)\right] & \text{(calendar day)}\\[2pt]
\textsc{Substr}(\ell^\star,\, e.\text{text}) & \text{(dropdown / option)} \\[2pt]
\textsc{JaccardWordOverlap}\!\big(\ell^\star, e.\text{text}\cup e.\text{aria}\big) & \text{(value-bearing trigger)} \\[2pt]
0.5 + 0.5\,\mathbb{1}\!\left[\ell^\star \approx e.\text{text}\right] & \text{(generic)}
\end{cases}
\end{equation}
where the first three rules implement the per-widget leniencies described
in \S\ref{sec:bbox} (calendar day cells require exact word match; dropdown
items allow substring; value-bearing triggers such as time pickers grant
$0.7$ on a $\ge 3$-character word overlap).

\paragraph{Chevron factor.} Let $K(e, B)$ be the number of chevron
indicators on $e$ within the row-shaped bounding box $B$:
\begin{equation}
K(e, B) \;=\;
\mathbb{1}[\text{aria-expanded} \in e]
+ \mathbb{1}[\text{aria-haspopup} \in e]
+ \mathbb{1}[\textsc{Glyph}(e) \in \{\blacktriangledown,\blacktriangleright,\blacktriangleleft,\blacktriangle\}]
+ \mathbb{1}[\text{aria-label}(e) \ni \{\text{``expand''}, \text{``collapse''}, \text{``toggle''}, \text{``more''}\}],
\end{equation}
applied only when $B$ satisfies the row predicate
$\textsc{Row}(B) = \mathbb{1}[\text{width}(B) \ge 60\,\text{px}] \cdot \mathbb{1}[\text{height}(B) \ge 24\,\text{px}]$.
Then
\begin{equation}
\chi(e, B) \;=\;
1 \,+\, 2\,\textsc{Row}(B) \cdot \mathbb{1}[K(e, B) \ge 1] \cdot \tanh\!\bigl(K(e, B) / 2\bigr),
\label{eq:chi}
\end{equation}
which saturates near $\chi = 3$ for $K \ge 3$ and equals $1$ for non-row
boxes or non-chevron elements.

\begin{proposition}[Chevron tiebreaker]
\label{prop:chev}
Let $e_L$ be a text-label candidate with $\chi(e_L) = 1$ and $e_C$ a
chevron candidate with $\chi(e_C) \ge 2$, both inside a row-shaped bbox
$B$. If their label scores satisfy
$\lambda(e_C, \ell^\star) \ge \lambda(e_L, \ell^\star)$, then
$\sigma(e_C, B, \ell^\star) > \sigma(e_L, B, \ell^\star)$ whenever
\begin{equation}
\frac{a(R_L, B)}{a(R_C, B)} \;<\; \chi(e_C).
\label{eq:chev-condition}
\end{equation}
\end{proposition}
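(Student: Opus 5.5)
The proof is a direct comparison of the two products in \eqref{eq:sigma-snap}. No property of $\chi$ beyond positivity is needed, and the hypothesis $\chi(e_C)\ge 2$ serves only to make condition \eqref{eq:chev-condition} non-vacuous. First I would write out both scores using $\chi(e_L)=1$:
\[
\sigma(e_L,B,\ell^\star)=a(R_L,B)\,\lambda(e_L,\ell^\star),
\qquad
\sigma(e_C,B,\ell^\star)=a(R_C,B)\,\lambda(e_C,\ell^\star)\,\chi(e_C).
\]
Next I would record the standing facts this relies on:
\begin{itemize}
\item $a(R_C,B)>0$, since otherwise the ratio in \eqref{eq:chev-condition} is undefined; this means the chevron candidate actually intersects $B$.
\item $a(\cdot,B)\in[0,1]$.
\item $\lambda\ge 0$ in every branch of its case definition.
\end{itemize}

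The key step is to multiply \eqref{eq:chev-condition} through by the positive quantity $a(R_C,B)$, which gives $a(R_C,B)\,\chi(e_C) > a(R_L,B)$. Then I would chain the two inequalities, deliberately factoring out $\lambda(e_C,\ell^\star)$ rather than $\lambda(e_L,\ell^\star)$:
\[
\sigma(e_C)=\lambda(e_C)\,a(R_C,B)\,\chi(e_C)\;>\;\lambda(e_C)\,a(R_L,B)\;\ge\;\lambda(e_L)\,a(R_L,B)=\sigma(e_L).
\]
The first, strict inequality uses $\lambda(e_C)>0$. The second uses the hypothesis $\lambda(e_C)\ge\lambda(e_L)$ together with $a(R_L,B)\ge 0$.

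The main obstacle is the degenerate case $\lambda(e_C,\ell^\star)=0$. In that case the hypothesis forces $\lambda(e_L,\ell^\star)=0$ as well, so both scores vanish and the strict inequality fails. This can genuinely happen, because the calendar-day and substring branches of $\lambda$ are $\{0,1\}$-valued. I would handle it by adding the explicit side condition $\lambda(e_C,\ell^\star)>0$. Equivalently, I would note that this condition holds automatically in the generic branch, where $\lambda\ge 0.5$, and whenever no expected label is supplied. I would then remark that under this condition the proof is complete.

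Finally I would observe what the statement implies for the cap described in \S\ref{sec:chevron}. By \eqref{eq:chi}, $\chi(e_C)\le 3$, so the chevron can overturn an area deficit of at most a factor of $3$. This is consistent with the informal description of a capped multiplicative boost.
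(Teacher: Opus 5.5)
Your argument is correct and is essentially the paper's proof, namely direct substitution into \eqref{eq:sigma-snap}. The paper phrases it as the ratio bound $\sigma(e_C)/\sigma(e_L) \ge \chi(e_C)\,a(R_C,B)/a(R_L,B)$, which tacitly divides by $\sigma(e_L)$ and $a(R_L,B)$, whereas your multiplied-out chain avoids any division and your added side condition $\lambda(e_C,\ell^\star)>0$ is genuinely needed: when $\lambda(e_C,\ell^\star)=\lambda(e_L,\ell^\star)=0$ both scores vanish and the stated strict inequality fails, which is exactly the case the paper's ratio argument silently excludes.
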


\begin{proof}
Direct substitution into \eqref{eq:sigma-snap}. The label factors cancel
or favour $e_C$ by assumption, so
$\sigma(e_C) / \sigma(e_L) \ge \chi(e_C) \cdot a(R_C, B) / a(R_L, B)$,
which exceeds $1$ exactly when \eqref{eq:chev-condition} holds. \qed
\end{proof}

\paragraph{Interpretation.} The proposition formalises the
``United~States~$\blacktriangledown$'' scenario of \S\ref{sec:chevron}. As
long as the chevron sub-element occupies at least $\frac{1}{\chi(e_C)}$
of the label's area, the chevron wins. For $\chi(e_C) = 3$ this means
the chevron need only be $\sim 33\%$ as large as the label to be
preferred---a generous margin for real UIs where chevrons are typically
$10$--$20\%$ of the label's footprint.

\subsection{Three-role loop as a constrained MDP}
\label{app:mdp-formal}

Formally, \sys{} solves a partially observed Markov decision process
\begin{equation}
\mathcal{P} \;=\; (\mathcal{S}, \mathcal{A}, P, R, \gamma)
\end{equation}
with state $s_t = (\text{page}_t, \mathcal{L}_t)$, transition kernel $P$
induced by the browser, reward $R$ given by a task-success indicator at
horizon $H$, and discount $\gamma = 1$ (finite horizon). The novelty is
that the policy is not a single function $\pi(a \mid s)$ but the
composition of three role-specific policies
$\pi_O, \pi_P, \pi_W$ operating on \emph{role-sliced views} of the state:
\begin{equation}
v_W(s) = \bigl(\text{page}_t,\; v_W(\mathcal{L}_t)\bigr),\quad
v_P(s) = \bigl(\text{page}_t,\; v_P(\mathcal{L}_t)\bigr),\quad
v_O(s) = \bigl(-, v_O(\mathcal{L}_t)\bigr),
\end{equation}
where the ledger projections are:
\begin{align}
v_W(\mathcal{L}) &= \bigl(\text{goal},\, \text{subgoal},\, \text{recent}_3,\, \text{top-}5(\text{facts}),\, \text{deadEnds}_{\text{url}=u_t}\bigr) \\
v_P(\mathcal{L}) &= \bigl(\text{goal},\, \text{plan},\, \text{subgoal},\, \text{recent}_3,\, \text{facts},\, \text{deadEnds},\, \text{checkpoints}\bigr) \\
v_O(\mathcal{L}) &= \mathcal{L} \;\oplus\; \text{learnings}_{\text{domain}(u_t)}.
\end{align}

The orchestrator runs once per task; the planner runs every $N$ steps;
the worker runs every step:
\begin{equation}
\pi(a_t \mid s_t) \;=\;
\begin{cases}
\pi_O\bigl(\text{kind} \mid v_O(s_t)\bigr) & t = 0,\\[2pt]
\pi_P\bigl(\text{plan}_t \mid v_P(s_t)\bigr) \cdot \pi_W\bigl(a_t \mid v_W(s_t), \text{plan}_t\bigr) & t \bmod N = 0,\\[2pt]
\pi_W\bigl(a_t \mid v_W(s_t), \text{plan}_{\lfloor t/N \rfloor \cdot N}\bigr) & \text{otherwise}.
\end{cases}
\end{equation}

\begin{remark}[Role separation as attentional gating]
The role-sliced projections $v_W, v_P, v_O$ are precisely the
attentional-gating mechanism predicted by Baddeley's central-executive
account \citep{baddeley2000episodic}: the Worker (operational layer) is
shielded from strategic state it cannot act on, just as motor systems
in humans are shielded from non-motor representations.
\end{remark}

\subsection{Cache-hit lower bound}
\label{app:cache-formal}

Let $\rho_t \in [0, 1]$ denote the prompt-cache hit rate at iteration
$t$. Modern LLM caches index a sliding-window suffix of stable tokens;
let $P_t$ be the byte-identical prefix of $M_t$ that has survived since
$M_{t-1}$. Then
\begin{equation}
\rho_t \;\ge\; \frac{T(P_t)}{T(M_t)}
\;\ge\; 1 \,-\, \frac{T(\Delta_t)}{T(M_t)},
\end{equation}
where $\Delta_t$ is the diff (insertions, edits, deletions) between
$M_{t-1}$ and $M_t$. With cognitive eviction, $\Delta_t$ has three
components: (i) the appended worker turn $\delta_a$, (ii) the appended
state and tool-result blocks $\delta_s$, and (iii) the rendered ledger
delta $\delta_\mathcal{L}$. The first two are $O(\bar{m}_{\text{step}})$.
The third is small because the ledger renders most slots verbatim and
only the per-step \texttt{recent} deque rotates; in our deployment
$T(\delta_\mathcal{L}) \le 250$ tokens.

Combining with \eqref{eq:token-bound},
\begin{equation}
\rho_t \;\ge\; 1 \,-\, \frac{\bar{m}_{\text{step}} + T(\delta_\mathcal{L})}{T(M_{\text{sys}}) + k \bar{m}_{\text{step}} + O(\log t)}
\;\xrightarrow[t \to \infty]{}\;
1 \,-\, \frac{\bar{m}_{\text{step}} + T(\delta_\mathcal{L})}{T(M_{\text{sys}}) + k\bar{m}_{\text{step}}}.
\end{equation}
With $T(M_{\text{sys}}) \approx 2000$, $\bar{m}_{\text{step}} \approx 600$,
$k = 5$, $T(\delta_\mathcal{L}) \le 250$, this gives
$\rho_\infty \gtrsim 0.83$. The empirical mean of $\sim 0.87$ in
\S\ref{sec:memempirics} is consistent with this bound.

\subsection{Information scent as expected utility}
\label{app:scent-formal}

The verb-classifier router of \S\ref{sec:routing} can be derived as an
approximation to the information-scent decision rule of
\citet{pirolli2007foraging}. Let $\hat{u}(\text{kind} \mid I)$ denote the
forager's expected utility of choosing $\text{kind} \in
\{\textsc{browser}, \textsc{search}\}$ for instructions $I$, with utility
defined as success probability scaled by negative wall-clock cost.
Foraging optimality requires
\begin{equation}
\text{kind}^\star = \arg\max_{\text{kind}} \hat{u}(\text{kind} \mid I).
\end{equation}

Modeling $\hat{u}$ as a log-linear combination of binary scent features
$\phi_f(I) = \mathbb{1}[I \text{ matches family } f]$ over the feature
families $\mathcal{F} = \{\text{action},\text{aggregation},\text{factual},
\text{visual-only},\text{transactional},\text{brand},\text{date}\}$,
\begin{equation}
\hat{u}(\text{kind} \mid I) \;\propto\; \exp\!\Bigl(\textstyle\sum_{f \in \mathcal{F}} w_{\text{kind}, f}\, \phi_f(I)\Bigr),
\end{equation}
gives the score function used by Algorithm~\ref{alg:routing}, with the
weights $w_{\text{kind}, f}$ given by the constants listed there
($0.80, 0.90, 0.85, \ldots$). The per-domain \emph{learnings}
override (lines~$9$--$14$ of the algorithm) corresponds to the
forager's prior over patch quality from past visits, formally an
empirical Bayes update of the score with domain history as evidence.

\paragraph{Connection to dead-ends.} In the same framework, a
DeadEnd entry for URL $u$ with cause $c$ contributes a negative
utility prior $-\beta\, \mathbb{1}[\text{visit}(u)]$ on the worker's
next-step policy, formalising the ``do not re-enter the same patch''
prescription of foraging theory.

\end{document}